\newtheorem{theorem}{Theorem}[section]
\newtheorem{lemma}[theorem]{Lemma}
\newcommand{\realnum}{\mathbb{R}}
\newcommand{\natnum}{\mathbb{N}}
\title{Analysis of Evolutionary Diversity Optimisation for the Maximum Matching Problem}
\author{
    Jonathan Gadea Harder \\
    Algorithm Engineering\\
    Hasso Plattner Institute\\ University of Potsdam\\
    Potsdam, Germany
    \And
    Aneta Neumann \\
   Optimisation and Logistics\\
School of Computer and Mathematical Sciences\\
University of Adelaide, Australia\\ 
Adelaide, SA 5005\\ 
Australia 
    \And
    Frank Neumann \\
    Optimisation and Logistics\\
School of Computer and Mathematical Sciences\\
University of Adelaide, Australia\\ 
Adelaide, SA 5005\\ 
Australia 
}
\begin{document}

\maketitle

\begin{abstract}
This paper delves into the enhancement of solution diversity in evolutionary algorithms (EAs) for the maximum matching problem, with a particular focus on complete bipartite graphs and paths. We utilize binary string encoding for matchings and employ Hamming distance as the metric for measuring diversity, aiming to maximize it. Central to our research is the $(\mu+1)$-EA and 2P-EA$_D$, applied for diversity optimization, which we rigorously analyze both theoretically and empirically. 

For complete bipartite graphs, our runtime analysis demonstrates that, for reasonably small $\mu$, the $(\mu+1)$-EA achieves maximal diversity with an expected runtime of \(O(\mu^2 m^4\log(m))\) for the small gap case (where the population size $\mu$ is less than the difference in the sizes of the bipartite partitions) and \(O(\mu^2 m^2\log(m))\) otherwise. For paths we give an upper bound of $O(\mu^3m^3)$. Additionally, for the 2P-EA$_D$ we give stronger performance bounds of \(O(\mu^2 m^2\log(m))\) for the small gap case, \(O(\mu^2 n^2\log(n))\) otherwise, and \(O(\mu^3m^2)\) for paths. Here \(n\) is the total number of vertices and \(m\) the number of edges.
Our empirical studies, examining the scaling behavior with respect to $m$ and $\mu$, complement these theoretical insights and suggest potential for further refinement of the runtime bounds.
\end{abstract}

\section{Introduction}

Evolutionary algorithms (EAs) stand as a robust class of heuristics that navigate the intricate landscapes of various domains, from combinatorial optimization to bioinformatics, and have proven especially valuable in addressing problems within graph theory \cite{NeumannWitt2010}. Central to the discussion in the field is the concept of diversity within EAs, which has been pivotal in enhancing the search process and preventing premature convergence on suboptimal solutions \cite{FriedrichOlivetoSudholtWitt2009}. 
\subsection{Related work}
Recent research in evolutionary computation investigates various connections between quality and diversity. Quality Diversity (QD) has gained recognition as a widely adopted search paradigm, particularly in the fields of robotics and games \cite{DBLP:journals/firai/PughSS16,DBLP:journals/tec/CullyD18,DBLP:conf/cig/GravinaKLTY19,DBLP:conf/cig/AlvarezDFT19,DBLP:journals/tec/BossensT21}. The goal of QD is to illuminate the space of solution behaviours by exploring various niches in the feature space and maximizing quality within each specific niche. In particular, the popular MAP-elites algorithm divides the search space into cells to identify the solution with the highest possible quality for each cell \cite{mouret2015illuminating,vassiliades2017using,DBLP:conf/cig/AlvarezDFT19,DBLP:journals/cim/ZhangCXBZ23}

The area of Evolutionary diversity optimization (EDO) aims to find a maximal diverse set of solutions that all meet a given quality criterion. EDO approaches have been applied in a wide range of settings. 
Diversity, while typically a means to avoid stagnation in the search for a single optimal solution, here is leveraged to yield a set of diverse, high-quality solutions. This is advantageous for decision-makers who value a variety of options from which to select the most fitting solution, accounting for different practical considerations and trade-offs \cite{UlrichBaderThiele2010,UlrichThiele2011}.
For example the use of different diversity measures has been explored for evolving diverse set of TSP instances that exhibit the difference in performance of algorithms for the traveling salesperson problem as well as differences in terms of features of variation of a given image.\cite{TSP}
In the classical context of combinatorial optimization, EDO algorithms have been designed for problems such as the knapsack problem~\cite{knapsack}, the computation of minimum spanning trees~\cite{mst}, communication networks~\cite{EDOSDCN,DBLP:conf/gecco/NeumannGYSCG023}, to compute sets of problem instances~\cite{DBLP:journals/ec/GaoNN21,DBLP:conf/gecco/NeumannGDN018,DBLP:conf/gecco/NeumannG0019}, as well as the computation of diverse sets of solutions for monotone submodular functions under given constraints~\cite{submodular,DBLP:conf/ijcai/DoGN023}. Furthermore, Pareto Diversity Optimization (PDO) has been developed in~\cite{DBLP:conf/gecco/NeumannA022} which is a coevolutionary approach optimizing the quality of the best possible solution as well as computing a diverse set of solutions meeting a given threshold value. EDO approaches have been analyzed with respect to their theoretical behavior for simple single- and multi-objective pseudo-Boolean functions~\cite{multiobjective} as well as simple scenarios of the traveling salesperson problem~\cite{TSP,DBLP:conf/gecco/NikfarjamBN021,DBLP:conf/foga/NikfarjamB0N21}, the minimum spanning tree problem~\cite{mst}, the traveling thief problem~\cite{DBLP:conf/gecco/NikfarjamN022a}, the permutation problems~\cite{DBLP:journals/telo/DoGNN22} and the optimization of submodular functions~\cite{submodular}. 

\subsection{Our contribution}
This paper builds upon the methodology of \cite{frank} applying the theoretical runtime analysis framework to the maximum matching problem, specifically in bipartite graphs and paths. We aim to provide a deeper understanding of how diversity mechanisms influence the efficiency of population-based EAs in converging to a diverse set of high-quality maximum matchings.

To achieve this, we adopt a binary string representation for matchings and use Hamming distance as a measure of diversity. We then delve into the theoretical underpinnings of evolutionary diversity optimization for the maximum matching problem, examining structural properties that impact the performance of diversity-enhancing mechanisms within EAs. We provide runtime analysis for evolutionary algorithms, shedding light on their scalability for different problem instances.
Finally, we present our experimental investigations to assess how close the bounds on the theoretical runtimes match the the experimental runtimes.

In summary, our research provides theoretical insights and empirical evidence to understand how diversity can be effectively maximized for the maximum matching problem. Our findings contribute to a deeper understanding of the interplay between diversity and optimization in EAs and pave the way for further research in this direction.

The paper is organized as follows. In Section~\ref{sec2}, we introduce the maximum matching problem and the evolutionary diversity optimization approaches analyzed in this study. We then explore structural properties and present runtime analyses for diversity optimization in the context of complete bipartite graphs and paths (\Cref{sec3}). Experimental investigations are detailed for both unconstrained and constrained scenarios (\Cref{sec4} and ~\ref{sec5}), followed by concluding remarks and suggestions for future research directions (\Cref{sec6}).

\section{Preliminaries}
\label{sec2}
In this part of the paper, we present the core concepts related to diversity optimization for matchings in bipartite graphs. We start by establishing the definitions and measures of diversity that will be used throughout our discussion.

\subsection{Maximum matching problem and diversity optimization}
Our study is concerned with the matching problem in bipartite graphs, described by a graph \( G = (V, E) \). The aim is to find a maximum matching \( M \), which is a collection of edges that do not share common vertices. It is presumed that each individual in the starting population represents a valid maximum matching. Our analysis is directed at determining how long it takes evolutionary algorithms to cultivate a population that is not only diverse but also meets a specified quality benchmark.

Let \( x \in \{0,1\}^{|E|} \) represent a bitstring where each bit corresponds to an edge in \( E \), indicating whether the edge is included in the matching. We define the fitness function \( f(x) \) as follows, adapting the approach introduced by Giel and Wegener\cite{GielWegener2003}:

\[
f(x) = \begin{cases} 
    -col(x) & \text{if } x \text{ represents an invalid matching} \\
    |x| & \text{if } x \text{ represents a valid matching}
\end{cases}
\]

Here, \( col(x) \) is the collision number, representing the count of pairs of edges that are included in \( x \) and share a common endpoint, rendering \( x \) an invalid matching, and \( |x| \) is the number of edges included in the matching represented by \( x \).

This fitness function imposes a penalty for invalid matchings proportional to the number of edge conflicts, thereby encouraging the evolution of valid matchings. The goal is to maximize \( f(x) \), which aligns with identifying a maximum matching that has no edge collisions.

The divergence between individuals is gauged using the Hamming distance, which is appropriate given our binary string representation of solutions. This distance measures how many bits differ between two strings.

\subsection{Diversity measure}
The diversity of a multiset (duplicates allowed) of search points $P$ (called population in the following) is defined as the cumulative Hamming distance across all unique individual pairings within \( P \). This is mathematically expressed as 
$$ D(P) = \sum_{(x,y) \in \tilde{P} \times \tilde{P}} H(x, y), $$

where \( \tilde{P} \) is the set (no duplicates) containing all solutions in \( P \), and \( H(x, y) \) is the Hamming distance between any two solutions \( x \) and \( y \).
The notion of contribution for a solution \( x \) within a population is quantified as the difference in diversity if \( x \) were to be excluded and defined as
$$c(x) = D(P) - D(P \setminus \{x\}).$$

\subsection{Algorithms}
The (\( \mu+1 \))-EA$_D$ (see Algorithm~\ref{alg:ead}) operates on a principle of maintaining and enhancing diversity within a population. It starts with a population of solutions, iteratively evolving them through mutation. In each iteration, it selects a solution uniformly at random, applies mutation, and if the new solution meets quality criteria, it is added to the population. To maintain population size, the least diverse individual (or one of them, if there are several) is removed. This process continues until the termination criterion is met. In our case this would be achieving maximal diversity and the quality criterion being a valid maximum matching.

\begin{algorithm}[t]
\SetAlgoLined
\DontPrintSemicolon
\caption{($\mu+1$)-EA$_D$}
\label{alg:ead}
\KwIn{A population size $\mu$, individual length $m$, mutation probability $1/m$}
\KwOut{A diverse population of solutions $P$}
Initialize $P$ with $\mu$ $m$-bit binary strings\;
\While{termination criterion not met}{
  Choose $s \in P$ uniformly at random\;
  Produce $s'$ by flipping each bit of $s$ with probability $1/m$ independently\;
  \If{$s'$ meets the quality criteria}{
    Add $s'$ to $P$\;
      Choose a solution $z \in P$ where $c(z) = \min\limits_{x \in P} c(x)$ u.a.r.\;
  Set $P := P \setminus \{z\}$\;
  }

}

\end{algorithm}
The Two-Phase Matching EA$_D$ (see Algorithm~\ref{alg:two_phase_matching}) is also designed to generate diverse solutions in the population. The first phase involves 'unmatching' a random subset of vertices in a solution, while the second phase focuses on 'rematching' these vertices to other unmatched vertices in the graph. The algorithm keeps adding these newly formed solutions to the population if they fulfill the quality criteria and, similar to the (\( \mu+1 \))-EA$_D$, removes the least diverse solutions to maintain population size. The algorithm continues this process until the set criteria are met, aiming to achieve a diverse set of high-quality matchings.

\begin{algorithm}[!t]
\SetAlgoLined
\DontPrintSemicolon
\caption{Two-Phase Matching EA$_D$ (2P-EA$_D$)}
\label{alg:two_phase_matching}
\KwIn{A population size $\mu$, individual length $m$}
\KwOut{A diverse population of solutions $P$}
Initialize $P$ with $\mu$ $m$-bit binary strings;\\
\While{termination criterion not met}{
  Choose $s\in P$ uniformly at random (u.a.r);\\
  Create $s'$ as duplicate of $s$;\\
  Select a subset of vertices $S \subseteq V,$ where each vertex is included with probability $\frac{1}{|V|}$;\\
  \ForEach{vertex $v \in S$}{
    Unmatch $v$ in $s'$ by setting corresponding bits to 0\;
  }
  \ForEach{vertex $v \in S$}{
    \If{there are unmatched neighbors}{
      Match $v$ in $s'$ u.a.r with unmatched neighbor\;
    }
  }
  \If{$s'$ meets quality criteria}{
    Add $s'$ to $P$\;
   Choose a solution $z \in P$ where $c(z) = \min\limits_{x \in P} c(x)$ u.a.r.\;
  }
}
\end{algorithm}

\subsection{Drift theorems}
We analyse the considered algorithms with respect to their runtime behaviour. The expected runtime refers to the expected number of generated offspring until a given goal has been achieved (usually until a valid population of maximal diversity has been computed).
For our analysis, we make use of the additive and multiplicate drift theorems which we state in the following. 
\begin{theorem}[Additive Drift Theorem\cite{driftthm}] \label{thm:additivedrift}
Let $S \subseteq \realnum$ be a finite set of positive numbers and let $({X^t})_{t\in \natnum}$ be a sequence of random variables over $S \cup \{0\}$. Let $T$ be the random variable that denotes the first point in time $t \in \natnum$ for which $X^t \leq 0$.
Suppose that there exists a constant $\delta_1 > 0$ such that

$$ E[X^{t} - X^{t+1} \mid T > t] \geq \delta_1 $$
holds. Then

$$E[T \mid X^0] \leq \frac{X^0}{\delta_1}.$$

If there exists a constant $\delta_2 > 0$ such that

$$ E[X^{t} - X^{t+1} \mid T > t] \leq \delta_2 $$
holds. Then

$$E[T \mid X^0] \geq \frac{X^0}{\delta_2}.$$

\end{theorem}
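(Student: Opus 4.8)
The plan is to prove both bounds through a single elementary telescoping argument based on the law of total expectation; this works directly with the \emph{averaged} drift hypothesis as stated (conditioned on the event $\{T>t\}$ rather than on a filtration), so no martingale or optional-stopping machinery is strictly needed. Throughout, $X^0$ is treated as a fixed constant, and we may assume $X^0>0$ since otherwise $T=0$ and both bounds are trivial. Two preliminary observations drive everything. First, because $S$ consists of \emph{positive} numbers and each $X^t$ lives in $S\cup\{0\}$, the process is nonnegative and, on the event $\{T<\infty\}$, we have $X^T=0$ exactly (there is no negative overshoot). Second, since $T$ is a nonnegative integer-valued random variable,
$$E[T\mid X^0]=\sum_{t=0}^{\infty}P(T>t\mid X^0),$$
which is the identity I will exploit to convert a bound on the drift into a bound on the runtime.

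For the upper bound, I would first rewrite the hypothesis in unconditional form: multiplying $E[X^t-X^{t+1}\mid T>t]\ge \delta_1$ by $P(T>t\mid X^0)$ gives $E[(X^t-X^{t+1})\mathbf{1}_{T>t}\mid X^0]\ge \delta_1\,P(T>t\mid X^0)$. Summing over all $t\ge 0$ and applying the identity above turns the right-hand side into $\delta_1\,E[T\mid X^0]$. The heart of the argument is evaluating the left-hand side. Along any fixed sample path the partial sums telescope, since $\mathbf{1}_{T>t}=1$ precisely for $t\le T-1$:
$$\sum_{t=0}^{N}(X^t-X^{t+1})\mathbf{1}_{T>t}=X^0-X^{\min(N+1,\,T)}.$$
Taking expectations gives $X^0-E[X^{\min(N+1,T)}\mid X^0]$, and letting $N\to\infty$ this tends to $X^0$ once one checks $E[X^{\min(N+1,T)}\mid X^0]\to 0$. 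Chaining the two sides yields $\delta_1\,E[T\mid X^0]\le X^0$, i.e.\ $E[T\mid X^0]\le X^0/\delta_1$.

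For the lower bound I would run the identical computation with the reversed hypothesis $E[X^t-X^{t+1}\mid T>t]\le \delta_2$, obtaining after summation $\delta_2\,E[T\mid X^0]\ge \sum_{t}E[(X^t-X^{t+1})\mathbf{1}_{T>t}\mid X^0]$. The same telescoping shows the right-hand side equals $X^0$ whenever $T<\infty$ almost surely, giving $E[T\mid X^0]\ge X^0/\delta_2$; and if $P(T=\infty\mid X^0)>0$ then $E[T\mid X^0]=\infty$ and the inequality holds trivially.

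The main obstacle is purely technical: justifying the passage to the limit $E[X^{\min(N+1,T)}\mid X^0]\to 0$ and handling the boundary behaviour at time $T$. This is exactly where the two stated properties of $S$ earn their keep. Finiteness of $S$ means the process is uniformly bounded by $\max S<\infty$, which supplies the dominating function needed for dominated convergence; positivity of $S$ guarantees $X^T=0$ on $\{T<\infty\}$, so the limiting telescoped remainder vanishes rather than leaving a residual term. Once these two points are nailed down, the interchange of the infinite sum and the expectation is legitimate and the remainder of the proof is routine bookkeeping.
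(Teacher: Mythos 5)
The paper does not prove this theorem: it is imported verbatim from the cited reference \cite{driftthm} and used as a black box, so there is no in-paper proof to compare against. Your telescoping argument is essentially the standard modern proof of additive drift (sum the drift inequality against $P(T>t\mid X^0)$, telescope pathwise up to the stopped time, use $E[T]=\sum_{t\ge 0}P(T>t)$), and it is correct; the two structural hypotheses on $S$ are indeed used exactly where you say they are (positivity kills the overshoot so $X^T=0$, finiteness gives the uniform bound for dominated convergence). Two small points are worth tightening. First, for the upper bound your phrasing is mildly circular: you cannot yet ``check'' $E[X^{\min(N+1,T)}\mid X^0]\to 0$, because on $\{T=\infty\}$ the stopped value need not vanish and you have not yet ruled that event out. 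But you do not need the limit there: nonnegativity alone gives $\delta_1\sum_{t=0}^{N}P(T>t\mid X^0)\le X^0-E[X^{\min(N+1,T)}\mid X^0]\le X^0$ for every $N$, which already yields $E[T\mid X^0]\le X^0/\delta_1<\infty$ and hence $T<\infty$ almost surely as a by-product; the exact limit is only needed (and is correctly handled by your case split) for the lower bound. Second, the hypothesis as stated conditions only on the event $\{T>t\}$ while the conclusion conditions on $X^0$; your step $E[(X^t-X^{t+1})\mathbf{1}_{T>t}\mid X^0]\ge\delta_1 P(T>t\mid X^0)$ tacitly strengthens the hypothesis to a drift bound conditional on $X^0$ (or on the history), which is how the theorem is actually applied in this paper and in the original reference. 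That imprecision is inherited from the statement itself rather than introduced by you, but it deserves a sentence in a fully rigorous write-up.
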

\begin{theorem}[Multiplicative Drift Theorem\cite{multdrift}]\label{thm:multiplicativedrift}
Let $(X_t)_{t \in \natnum}$ be random variables over $\realnum$, $x_{\min} >0$, and let $T = \min\{t \mid X_t < x_{\min}\}$. Furthermore, suppose that\\

(a) $X_0 \geq x_{\min}$ and, for all $t \leq T$, it holds that $X_t \geq 0$, and that

(b) there is some value $\delta >0$ such that, for all $t < T$, it holds that $X_t - E[X_{t+1} \mid X_0, \ldots, X_t] \geq \delta X_t$.\\

Then

$$ E[T \mid X_0] \leq \frac{1+\ln\left(\frac{X_0}{x_{\min}}\right)}{\delta}.$$
\end{theorem}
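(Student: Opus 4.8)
The plan is to derive the multiplicative bound from the Additive Drift Theorem (\Cref{thm:additivedrift}) by moving to a logarithmic scale, on which the constant multiplicative factor $(1-\delta)$ in hypothesis (b) turns into a constant additive decrease. Writing $\mathcal{F}_t$ for the history $X_0,\dots,X_t$, hypothesis (b) reads $E[X_{t+1}\mid\mathcal{F}_t]\le(1-\delta)X_t$ for every $t<T$, and the goal is to turn this into an additive drift of at least $\delta$ for a suitable potential.

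The core step is to set $Y_t:=\ln(X_t/x_{\min})$, so that $Y_0=\ln(X_0/x_{\min})\ge 0$ and $Y_t\ge 0$ precisely while $X_t\ge x_{\min}$, i.e.\ for all $t<T$. Since $\ln$ is concave, Jensen's inequality gives, for every $t<T$,
\[
E[\ln X_{t+1}\mid\mathcal{F}_t]\le\ln E[X_{t+1}\mid\mathcal{F}_t]\le\ln\bigl((1-\delta)X_t\bigr)=\ln(1-\delta)+\ln X_t.
\]
Subtracting the constant $\ln x_{\min}$ from both sides and using the elementary inequality $\ln(1-\delta)\le-\delta$ yields
\[
E[Y_t-Y_{t+1}\mid\mathcal{F}_t]\ge\delta\qquad\text{for all }t<T,
\]
which is exactly the additive drift hypothesis with $\delta_1=\delta$. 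Feeding $(Y_t)$ into \Cref{thm:additivedrift} then controls the time until the potential becomes non-positive in terms of $Y_0/\delta=\ln(X_0/x_{\min})/\delta$.

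The step I expect to be the main obstacle is recovering the precise constant, in particular the additive $+1$ in the claimed bound, which stems from a mismatch at the boundary: \Cref{thm:additivedrift} measures the first time the potential is $\le 0$, whereas $T$ is the first time $X_t<x_{\min}$, i.e.\ the first time $Y_t<0$ \emph{strictly}, and at that step the log-potential can overshoot far below $0$. To settle this cleanly I would bypass the exact matching of stopping rules and argue directly through $E[T\mid X_0]=\sum_{t\ge 0}P(T>t\mid X_0)$. Iterating hypothesis (b) on the stopped process gives $E[X_t\mathbf{1}_{T>t}\mid X_0]\le(1-\delta)^tX_0$, and since $\{T>t\}\subseteq\{X_t\ge x_{\min}\}$, Markov's inequality yields $P(T>t\mid X_0)\le(1-\delta)^tX_0/x_{\min}$. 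Combining this with the trivial bound $P(T>t\mid X_0)\le 1$, splitting the sum at the index where the geometric bound reaches $1$, and using $-\ln(1-\delta)\ge\delta$ to convert the logarithm, the head of the sum contributes at most $\ln(X_0/x_{\min})/\delta$ while the geometric tail contributes at most $1/\delta$, giving the stated $\bigl(1+\ln(X_0/x_{\min})\bigr)/\delta$. The delicate points are justifying the inductive inequality $E[X_{t+1}\mathbf{1}_{T>t+1}\mid X_0]\le(1-\delta)E[X_t\mathbf{1}_{T>t}\mid X_0]$ (using that $\mathbf{1}_{T>t}$ is $\mathcal{F}_t$-measurable and $\{T>t+1\}\subseteq\{T>t\}$) and the clean separation of the series into its head and its geometric tail.
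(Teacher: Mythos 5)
The paper does not prove this theorem; it is quoted from the literature (\cite{multdrift}) and used as a black box, so there is no internal proof to compare yours against and I judge the argument on its own terms. Your strategy is sound and is a legitimate alternative to the usual potential-function proof. You are right to abandon the plain log-potential $Y_t=\ln(X_t/x_{\min})$: the obstacle is exactly the one you name, namely that $Y_T$ can be arbitrarily negative at the stopping time, while \Cref{thm:additivedrift} as stated requires a process that stays nonnegative, so $E[Y_0-Y_T]$ cannot be bounded by $Y_0$. Your replacement argument is also correct in its delicate steps: $\mathbf{1}_{T>t}$ is measurable with respect to $X_0,\dots,X_t$ because $\{T>t\}=\bigcap_{s\le t}\{X_s\ge x_{\min}\}$; the inequality $X_{t+1}\mathbf{1}_{T>t+1}\le X_{t+1}\mathbf{1}_{T>t}$ uses precisely hypothesis (a) on the event $\{T=t+1\}$; and Markov applies because $\{T>t\}\subseteq\{X_t\ge x_{\min}\}$.

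The one genuine flaw is the final accounting. The head of the sum consists of $\lceil\ln(X_0/x_{\min})/(-\ln(1-\delta))\rceil$ terms each bounded by $1$, and the ceiling can exceed $\ln(X_0/x_{\min})/\delta$ by almost $1$; the geometric tail is indeed at most $1/\delta$. What your argument actually proves is therefore $E[T\mid X_0]\le\bigl(1+\ln(X_0/x_{\min})\bigr)/\delta+1$, not the stated constant. This is harmless for every application in this paper (all uses are asymptotic, and the extra $1$ is at most $1/\delta$ since hypotheses (a) and (b) together force $\delta\le1$), but it is not the theorem as stated. To recover the exact constant, the clean route is the standard one: apply \Cref{thm:additivedrift} to the truncated potential $g(X_t)$ with $g(x)=1+\ln(x/x_{\min})$ for $x\ge x_{\min}$ and $g(x)=0$ for $x<x_{\min}$. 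The pointwise inequality $g(x)-g(y)\ge 1-y/x$, valid for all $x\ge x_{\min}$ and $y\ge0$ (use $\ln z\ge 1-1/z$ when $y\ge x_{\min}$, and $g(x)-g(y)\ge1\ge 1-y/x$ otherwise), replaces Jensen and yields $E[g(X_t)-g(X_{t+1})\mid X_0,\dots,X_t]\ge\delta$ for $t<T$; since $g\ge0$ there is no overshoot, $T$ is exactly the first time $g(X_t)=0$, and additive drift returns $E[T\mid X_0]\le g(X_0)/\delta=\bigl(1+\ln(X_0/x_{\min})\bigr)/\delta$.
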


\section{Runtime Analysis for complete bipartite graphs}
\label{sec3}
This section introduces key theoretical results on complete bipartite graphs. We commence with a lemma that characterizes the conditions for maximal diversity within a population. Subsequently, we present a series of theorems that delineate the expected runtime to achieve this optimal diversity. These theorems compare the performance of the \((\mu+1)\)-EA\(_D\) and 2P-EA\(_D\) algorithms, providing a quantitative basis for assessing their efficacy.

\begin{lemma}[Diversity of a Population]
Maximal diversity \( D(P) \) on a complete bipartite graph $((G=(L,R),E)$ for a population $P$ of size $\mu<\frac{|R|}{2}$, is attained if and only if all matchings in \( P \) are pairwise edge-disjoint.
\end{lemma}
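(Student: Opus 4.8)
The plan is to reduce the statement to a clean counting bound paired with an explicit construction. First I would record the basic observation that, on a complete bipartite graph, every valid maximum matching has the same cardinality $k=\min(|L|,|R|)$, since the quality criterion forces each individual in $P$ to be a maximum matching. Consequently, for any two distinct matchings $M_i,M_j\in\tilde P$, viewing them as edge sets gives $H(M_i,M_j)=|M_i\triangle M_j|=|M_i|+|M_j|-2|M_i\cap M_j|=2\big(k-|M_i\cap M_j|\big)$. This already yields the crucial inequality $H(M_i,M_j)\le 2k$, with equality precisely when $M_i\cap M_j=\emptyset$, i.e.\ when the two matchings are edge-disjoint. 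The step is routine but is the hinge of the whole argument.

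Next I would prove the upper bound $D(P)\le \mu(\mu-1)\,2k$. Writing the diversity as a sum over ordered pairs of \emph{distinct} individuals, there are exactly $|\tilde P|(|\tilde P|-1)$ such pairs, and each contributes at most $2k$ by the previous step; since $|\tilde P|\le\mu$ and $t\mapsto t(t-1)$ is increasing on $[1,\infty)$, this gives $D(P)\le |\tilde P|(|\tilde P|-1)\,2k\le \mu(\mu-1)\,2k$. Tracking the two places where the inequalities can be tight shows that equality holds if and only if (a) $|\tilde P|=\mu$, i.e.\ $P$ contains no duplicates, and (b) every pair of distinct individuals is edge-disjoint. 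Together (a) and (b) are equivalent to the single statement that all matchings in $P$ are pairwise edge-disjoint, since edge-disjoint matchings are automatically distinct.

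The remaining, and main, task is to show that this upper bound is actually attainable, so that ``maximal diversity'' really means $D(P)=\mu(\mu-1)\,2k$; without this the ``only if'' direction could fail. Assume without loss of generality $|L|\le|R|$ (the other case is symmetric, rotating over $L$ instead), so $k=|L|$ and each maximum matching saturates $L$. Labelling $R=\{r_0,\dots,r_{|R|-1}\}$ and $L=\{\ell_0,\dots,\ell_{k-1}\}$, I would exhibit the cyclic family $M_t=\{(\ell_i,r_{(i+t)\bmod |R|}):0\le i<k\}$ for $t=0,\dots,|R|-1$: each $M_t$ is a maximum matching (distinct $\ell_i$ receive distinct targets because $|i-i'|<k\le|R|$), and for fixed $i$ the vertex $\ell_i$ is matched to a different element of $R$ in each $M_t$, so the $M_t$ are pairwise edge-disjoint. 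This is exactly where the hypothesis on $\mu$ enters: since $\mu<|R|/2<|R|$, we may select $\mu$ of these matchings, guaranteeing that a population of $\mu$ pairwise edge-disjoint maximum matchings exists. This is the only role of the bound on $\mu$, and it is comfortably more than the construction strictly needs.

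Finally I would assemble the two directions. If all matchings in $P$ are pairwise edge-disjoint then $D(P)=\mu(\mu-1)\,2k$, which equals the upper bound and is therefore maximal. Conversely, if $P$ attains maximal diversity, then by the construction the maximum value is $\mu(\mu-1)\,2k$, and by the tightness analysis attaining it forces all matchings to be pairwise edge-disjoint. The only subtle points to keep honest are the duplicate bookkeeping hidden in passing between $P$ and $\tilde P$, and verifying that the cyclic family genuinely consists of valid maximum matchings; both become straightforward once the Hamming-distance identity is in place, so I expect the construction (and isolating precisely where the bound on $\mu$ is used) to be the only real obstacle.
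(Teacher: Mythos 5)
Your proposal is correct and follows essentially the same route as the paper: bound the Hamming distance of each pair by twice the matching size with equality exactly for edge-disjoint pairs, then show a family of $\mu$ pairwise edge-disjoint maximum matchings exists so the bound is attainable. Your explicit cyclic construction $M_t=\{(\ell_i,r_{(i+t)\bmod |R|})\}$ and the careful handling of duplicates in $\tilde P$ are in fact tighter than the paper's own justification, which only gestures at feasibility via a count of available edges.
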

\begin{proof}
Consider a set of matchings in \( G \), where each matching is a solution in the population. Let the diversity of this set be denoted by \( D \), defined as the sum of pairwise Hamming distances between all matchings.

A matching in \( G \) involves pairing each vertex in \( R \) with a unique vertex in \( L \), yielding \( |R| \) edges in each matching. The Hamming distance between any two distinct matchings is the count of edges that differ between them.

To maximize \( D \), each pair of matchings should differ by the greatest number of edges. This maximum difference is \( |R| \), occurring when the matchings share no common edges.

Given \( \mu \) matchings, the number of distinct pairs of matchings is \( \binom{\mu}{2} \). If all matchings are disjoint, each pair contributes \( 2|R| \) to \( D \), leading to \( D = \mu (\mu-1) |R| \).

If any pair of matchings shares at least one edge, the Hamming distance for that pair is strictly less than \( |R| \), thus reducing \( D \). Therefore, \( D \) is maximized if and only if all \( \mu \) matchings are pairwise edge-disjoint.

This argument hinges on the fact that \( \mu < \frac{|R|}{2} \), ensuring the feasibility of having disjoint matchings in \( G \) since each matching uses \( |R| \) edges and there are \( |R||L| \) possible edges in \( G \). Consequently, it is possible to construct \( \mu \) disjoint matchings, each utilizing a different subset of \( |R| \) edges from the total pool.
\end{proof}
In the following theorem we show that there is always a local improvement, needing $2$ bit flips,  to reach a population with maximum diversity if the difference in size between both partitions is larger than the population size.
\begin{theorem}\label{thm:complete_bip_graph_big_gap}
Let \( G=((L,R),E) \) be a complete bipartite graph with \( \mu < \frac{|R|}{2} \), \( \mu < |L|-|R| \) and \( |R|<|L| \). In the \( (\mu + 1) \)-EA$_D$ applied to \( G \), the expected time until the diversity is maximized is \( O(\mu^2 m^2\log(m)) \).
\end{theorem}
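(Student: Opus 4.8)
The plan is to measure progress by the gap $g(P) = D_{\max} - D(P)$, where $D_{\max} = \mu(\mu-1)|R|$ is the value identified in the preceding lemma, and to show that $g$ contracts by a multiplicative factor each step. Since $|R| < |L|$, every maximum matching saturates $R$ and so uses exactly $|R|$ edges; writing $d_e$ for the number of distinct population matchings containing edge $e$, one has $H(M_i, M_j) = 2(|R| - |M_i \cap M_j|)$, whence $g(P)$ is a fixed positive multiple of $\sum_e d_e(d_e - 1)$, and in particular $g = 0$ exactly when the matchings are pairwise edge-disjoint, i.e.\ at maximal diversity. A preliminary observation I would record is monotonicity: when an offspring $s'$ is added and the minimum-contribution element $z$ is deleted, $D(P') = D(P \cup \{s'\}) - c(z) \ge D(P \cup \{s'\}) - c(w)$ for every $w \in P \cup \{s'\}$, so $D$ never decreases along the run and $g$ is non-increasing.

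The crux is a structural claim: whenever $g(P) > 0$ there is a $2$-bit flip strictly increasing $D$. If $g(P) > 0$ then some edge $e = (r,\ell)$ has $d_e = d \ge 2$; fix a matching $M$ containing it. Because every population matching saturates $R$, the edge frequencies incident to $r$ sum to at most $\mu$, so at most $\mu$ vertices of $L$ are endpoints of a positive-frequency edge at $r$. Since $M$ leaves $|L| - |R|$ vertices of $L$ unmatched and, by hypothesis, $\mu < |L| - |R|$, there is a vertex $\ell'$ that is unmatched in $M$ and satisfies $d_{(r,\ell')} = 0$. Flipping $(r,\ell)$ off and $(r,\ell')$ on turns $M$ into a valid maximum matching $s'$ that is moreover distinct from every member of $P$ (it carries the previously unused edge $(r,\ell')$); in the edge-frequency potential this sends $d_e \mapsto d_e - 1$ while creating a fresh singleton edge, lowering $\sum_e d_e(d_e-1)$ by $2(d-1) \ge 2$. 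By the monotonicity inequality above, deleting the true minimum-contribution element is at least as good as deleting $M$, so the realized decrease of $g$ is at least $2(d-1)$.

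To turn this into drift I estimate the probability of a prescribed beneficial flip: selecting the right parent costs $1/\mu$ and flipping exactly two designated bits costs $m^{-2}(1-1/m)^{m-2} = \Omega(m^{-2})$, so the flip has probability $\Omega(1/(\mu m^2))$, and flips with distinct parents or distinct bit pairs are disjoint events. Choosing one beneficial flip per edge with $d_e \ge 2$ and summing the gains, together with the bound $\sum_e d_e(d_e - 1) \le \mu \sum_{e : d_e \ge 2} (d_e - 1)$ (valid since $d_e \le \mu$), gives $E[g^t - g^{t+1} \mid g^t] = \Omega\!\big(\tfrac{1}{\mu m^2}\big) \sum_{e : d_e \ge 2}(d_e - 1) = \Omega\!\big(\tfrac{1}{\mu^2 m^2}\big)\, g^t$, i.e.\ multiplicative drift with $\delta = \Omega(1/(\mu^2 m^2))$. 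As $g$ is a non-negative integer with $g_0 \le |R|\mu(\mu-1) = O(\mu^2 m)$ and $\mu < |R|/2 \le m$ forces $\ln g_0 = O(\log m)$, \Cref{thm:multiplicativedrift} with $x_{\min} = 1$ yields $E[T] \le (1 + \ln g_0)/\delta = O(\mu^2 m^2 \log m)$.

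The main obstacle is the structural claim of the second paragraph: guaranteeing a zero-frequency free endpoint $\ell'$ for a single $2$-bit flip is precisely where the large-gap hypothesis $\mu < |L|-|R|$ is essential — in the small-gap regime one can be forced onto longer augmenting swaps, which is exactly what degrades the bound. A secondary point to verify is that identical individuals never stall progress: a duplicate contributes $0$ to $D$, hence is a minimum-contribution element and is the one discarded when a new distinct beneficial matching is inserted, so the run drives the population to $\mu$ pairwise edge-disjoint matchings, at which $g = 0$.
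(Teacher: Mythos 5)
Your proposal is correct and follows essentially the same route as the paper's proof: the same potential (gap to optimal diversity), the same use of the hypothesis $\mu < |L|-|R|$ to guarantee an improving $2$-bit flip at every shared edge, the same bound relating the number of shared edges to the gap, and the same application of multiplicative drift with success probability $\Omega(1/(\mu m^2))$. Your edge-frequency identity $g \propto \sum_e d_e(d_e-1)$ is merely a cleaner bookkeeping device for the paper's inequality $\overline{e}\,\mu \ge X_t$, and your explicit treatment of the replacement step and of the free endpoint $\ell'$ being unmatched in $M$ tightens details the paper states more loosely.
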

\begin{proof}
We define the potential function \( X_t \) as the difference between the optimal diversity \( \text{div}_{\text{opt}} \) and the current diversity \( \text{div}(t) \) at time \( t \):
\[ X_t\coloneq \text{div}_{\text{opt}} - \text{div}(t). \]

In each solution, exactly \( |R| \) vertices from \( L \) are adjacent to a matching edge, leaving \( |L|-|R| \) vertices in \( L \) unadjacent in every solution. Additionally, each vertex in \( R \) can be matched to at most \( \mu<|L|-|R| \) different vertices across all solutions, ensuring that, for each vertex in R, there exists a vertex in L that is not matched with it in any solution.

To show that there is always a 2-bit flip which improves diversity by at least $\frac{X_t}{\mu} $, we focus on a sequence of improving \( 2 \)-bit flips. Each \( 2 \)-bit flip corresponds to changing a match for a vertex in \( R \), which entails deactivating one edge (currently part of a matching) and activating another edge (currently not part of the matching). This process is akin to reassigning a vertex in \( R \) to a different, unmatched vertex in \( L \). 

Consider an edge \( e \) used in \( i \) solutions. When this edge is deactivated (removed from the matching), the diversity change is \( -(\mu - i) \), since \( \mu - i \) solutions lose a unique edge, reducing diversity. Conversely, when a new edge is activated (added to the matching) that is unused across all other solutions,it contributes \( \mu - 1 \) to the diversity.

Thus, for each such \( 2 \)-bit flip involving edge \( e \), the total change in diversity is:
\[ -(\mu - i) + (\mu - 1) = -\mu + i + \mu - 1 = i - 1. \]
This calculation demonstrates that the diversity improve achieved by applying the \( 2 \)-bit flip for an edge in the sequence either decreases or remains unchanged if it is flipped later in the sequence. Note that in each step of the sequence the new maximum matching contains an edge unused by any other matching, so the offspring is always valid and the diversity improvement is at least $1$, since this would be achieved by replacing the parent. Also since as soon as all edges are unique across all solutions the population is optimal and thus the total change across all such edges equals the difference to the optimum $X_t$.

Let \( \overline{e} \) represent the count of such "imperfect" edges (edges used in more than one solution). Applying the 2-bit flip to one edge of the sequence gives at-least the diversity increase it achieves in the sequence, since the value of $i$ can only decrease or remain unchanged, and it is at most $\mu$. Thus $\overline{e}\mu \geq X_t$, which implies $\overline{e}\geq\frac{X_t}{\mu}$. The expected drift then is:
\begin{align*}
E[X_{t} - X_{t+1} \mid X_t] &\geq \frac{\overline{e}}{\mu m^2}\left(1-\frac{1}{m}\right)^{m-2} \geq \frac{X_t}{\mu^2 m^2e}.
\end{align*}

Given that $\binom{\mu}{2}2|R|$ is the maximum diversity, when all edges aire pairwise distinct, it holds that \( X_0 \leq \binom{\mu}{2}2|R|\leq \mu^2|R| \leq |R|^3\leq m^{1.5} \), the application of the multiplicative drift theorem yields the expected runtime of \( O(\mu^2 m^2\log(m)) \) to achieve maximum diversity.
\end{proof}

We now show that the Two-Phase Matching Algorithm achieves significant speedup since no longer two edges have to be flipped to change where one vertex is matched to.
\begin{theorem}\label{thm:two_phase_matching}
Let \( G=((L,R),E) \) be a complete bipartite graph with \( \mu < \frac{|R|}{2} \), \( \mu < |L|-|R| \) and \( |R|<|L| \). In the Two-Phase Matching Evolutionary Algorithm applied to \( G \), the expected time until the diversity is maximized is \( O(\mu^2 n^2\log(n)) \), where \( n = |L|+|R| \).
\end{theorem}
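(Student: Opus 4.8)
The plan is to reuse the potential and the structural analysis developed for \Cref{thm:complete_bip_graph_big_gap}, replacing only the drift computation so that it reflects the cheaper move available to the 2P-EA$_D$. As before I would set $X_t \coloneq \mathrm{div}_{\mathrm{opt}} - \mathrm{div}(t)$, where by the preceding lemma the optimum $\mathrm{div}_{\mathrm{opt}} = \binom{\mu}{2}2|R|$ is attained exactly when all matchings are pairwise edge-disjoint. The counting argument transfers almost verbatim: writing $\overline{e}$ for the number of edges shared by more than one solution, there are at least $\overline{e}$ improving single-vertex reassignments -- each taking a vertex $v\in R$ currently joined by a shared edge and re-matching it onto a fresh edge -- and since each reassignment raises diversity by at most $\mu$ while their combined effect closes the whole gap $X_t$, we again obtain $\overline{e}\geq X_t/\mu$, with every individual reassignment improving diversity by at least $1$.

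The essential difference, and the point of the theorem, is how cheaply the 2P-EA$_D$ realises one such reassignment. Whereas the $(\mu+1)$-EA$_D$ needs two coordinated bit flips (probability $\Theta(1/m^2)$), the two-phase operator reassigns a single vertex in one step. Concretely, to perform the improving move associated with an imperfect edge $(v,u)$ in a solution $s$ it suffices that: (i) $s$ is chosen as parent, probability $1/\mu$; (ii) the selected set is exactly $S=\{v\}$, probability $\tfrac1n\bigl(1-\tfrac1n\bigr)^{n-1}\geq \tfrac{1}{en}$; and (iii) the rematch step joins $v$ to an unmatched neighbour lying on a fresh edge. After unmatching $v$ there are $|L|-|R|+1$ unmatched vertices of $L$, all neighbours of $v$ in the complete bipartite graph, and since $v$ carries at most $\mu$ distinct edges across the whole population at least $|L|-|R|+1-\mu\geq 1$ of them are fresh, the hypothesis $\mu<|L|-|R|$ being used exactly here; hence step (iii) succeeds with probability at least $\tfrac{1}{|L|-|R|+1}\geq \tfrac1n$. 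Note that $v$ always finds some unmatched neighbour, so $R$ stays saturated and the offspring is always a valid maximum matching; moreover the realised diversity gain is at least that of replacing the parent by $s'$, so deleting the least-contributing individual never wastes the improvement.

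Multiplying the three probabilities and summing over the $\overline{e}$ pairwise disjoint improving events gives
\[
E[X_t-X_{t+1}\mid X_t]\ \geq\ \overline{e}\cdot\frac{1}{\mu}\cdot\frac{1}{en}\cdot\frac{1}{n}\ \geq\ \frac{X_t}{e\mu^2 n^2}.
\]
With $X_0\leq \binom{\mu}{2}2|R|\leq \mu^2|R|=\mathrm{poly}(n)$ and $x_{\min}=1$ (diversity is integer-valued, so a gap below $1$ must be $0$), the Multiplicative Drift Theorem (\Cref{thm:multiplicativedrift}) applied with $\delta = 1/(e\mu^2 n^2)$ yields $E[T]=O\!\bigl(\mu^2 n^2\log n\bigr)$, as claimed.

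I expect the main obstacle to be the rigorous justification of step (iii): one must certify that a fresh unmatched neighbour exists (this is precisely where $\mu<|L|-|R|$ enters) and that the uniform rematch picks one with probability $\Omega(1/n)$, taking care that the at most $\mu$ forbidden partners of $v$ and the set of unmatched $L$-vertices are counted consistently. A secondary point to verify is that restricting attention to the event $S=\{v\}$ costs only constant factors and that the improving moves for distinct imperfect edges can genuinely be treated as disjoint drift contributions -- which holds because within a single matching distinct shared edges use distinct endpoints $v$, so the events $S=\{v\}$ (together with the parent choice) do not collide.
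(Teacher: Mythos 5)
Your proposal is correct and follows essentially the same route as the paper's proof: the same potential $X_t$, the same bound $\overline{e}\geq X_t/\mu$ imported from the analysis of \Cref{thm:complete_bip_graph_big_gap}, the same three-factor probability $\frac{1}{\mu}\cdot\frac{1}{en}\cdot\frac{1}{n}$ for selecting the parent, unmatching exactly one shared-edge vertex, and rematching it to a fresh unmatched neighbour, and the same application of multiplicative drift. Your explicit count of $|L|-|R|+1-\mu\geq 1$ fresh unmatched neighbours is a welcome justification of the rematch probability that the paper only asserts.
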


\begin{proof}
We define the potential function \( X_t \) as the difference between the optimal diversity \( \text{div}_{\text{opt}} \) and the current diversity \( \text{div}(t) \) at time \( t \):
\[ X_t\coloneq \text{div}_{\text{opt}} - \text{div}(t). \]

The maximal diversity is achieved when all matchings in the population are pairwise edge-disjoint. The drift in the potential function \( X_t \) at each step of the algorithm is analyzed as follows:

In each step, the algorithm first selects a solution and a subset of vertices, which it rematches with unmatched vertices in \( L \). Let \( \overline{e} \) represent the count of such "imperfect" edges (edges used in more than one solution). As shown in  \Cref{thm:complete_bip_graph_big_gap} it holds that $\overline{e}\geq\frac{X_t}{\mu}$. The expected drift then is obtained by selecting the corresponding solution to any of the $\overline{e}$ edges, unmatching the adjacent vertex in $R$ and rematching it to include an edge unused by any solution. The probability to unmatch any and no other particular vertex in \( R \) is \( \frac{1}{n}(1-\frac{1}{n})^{n-1}\geq \frac{1}{en}\), and the probability of matching it with an appropriate unmatched vertex in \( L \) is at-least \( \frac{1}{n} \).

The expected decrease in the potential function \( X_t \) per step, or the expected drift, is then given by:
\[ E[X_{t} - X_{t+1} \mid X_t] \geq \frac{\overline{e}}{\mu n^2e}\geq\frac{X_t}{\mu^2 n^2e}, \]
where the factor \( \frac{1}{\mu n^2} \) accounts for the probability of selecting the right vertex and making a beneficial rematch.

Given that $\binom{\mu}{2}2|R|$ is the maximum diversity, when all edges are pairwise distinct, it holds that \( X_0 \leq \binom{\mu}{2}2|R|\leq \mu^2|R| \leq |R|^3\leq m^{1.5}\leq n^3 \), the application of the multiplicative drift theorem yields the expected runtime of \( O(\mu^2 n^2\log(n)) \) to achieve maximum diversity.
\end{proof}
\Cref{thm:complete_bip_graph_small_gap} covers the case $\mu \geq |L|-|R|$ missing in the previous theorem, which gives a much larger runtime bound. Intuitively this happens because as $\mu$ gets greater than the gap between $|L|-|R|$ it is not longer guaranteed that we can always find a new rematch, such that this matching edge is not used by any other solution, thus making more than two bit flips necessary. \Cref{thm:sharp_comp_bip} includes such a situation with a theoretical lower bound.

\begin{theorem}\label{thm:sharp_comp_bip}
Let \( G=((L,R),E) \) be a complete bipartite graph with \( |R| < |L| \). Consider a population size \( \mu \), satisfying \( \mu < \frac{|R|}{2} \) and \( \mu \geq |L|-|R| \). There exists a starting population \( P_w \) such that when the \( (\mu + 1) \)-EA\(_D\) is applied to \( G \), the expected time to reach a population with maximal diversity is \( \Omega(m^{3.5}) \).
\end{theorem}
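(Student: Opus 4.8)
The plan is to exhibit an adversarial starting population $P_w$ that already has almost maximal diversity but is trapped in a configuration from which no diversity-improving $2$-bit flip exists, so that the algorithm can only reach maximal diversity through a coordinated $4$-bit mutation (a swap that simultaneously rematches two vertices of $R$ inside a single solution). Since such a $4$-bit flip has probability $\Theta(m^{-4})$ per selected individual and only $O(\sqrt m)$ of them lead to the optimum, the per-generation success probability will be $O(m^{-3.5})$, and a waiting-time argument (equivalently, additive drift applied to the indicator that the optimum has not yet been reached) then yields the expected time $\Omega(m^{3.5})$.

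First I would fix the parameters. Writing $|R| = r$ and $|L| = r + g$ with gap $g = |L| - |R|$, the constraints $g \le \mu < r/2$ are met with $g,\mu = \Theta(r)$ below $r/2$, so that $m = |E| = r(r+g) = \Theta(r^2)$ and hence $\sqrt m = \Theta(r)$. I then build $P_w$ so that all but one solution are pairwise edge-disjoint, while the remaining solution $s$ carries a single conflict on a vertex $v^* \in R$. The key structural requirement is that every one of the $g$ vertices of $L$ left free by $s$ is \emph{blocked} for $v^*$, meaning the edge joining $v^*$ to it is already used in some other solution; this is arranged through the other matches of $v^*$ (and, at the boundary $\mu = g$, through a tightly coupled choice of the remaining solutions) so that rematching $v^*$ to any free vertex via a single $2$-bit flip would merely relocate a shared edge, not remove one. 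Consequently the only way to turn $s$ into an edge-disjoint matching is to move some other $R$-vertex $w$ of $s$ off an $L$-vertex $u$ with $(v^*,u)$ globally unused and to simultaneously place $w$ on a free vertex: a $4$-bit flip.

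The core lemma is that $P_w$ is a strict local optimum for diversity under all $2$-bit flips, and, crucially, that this property is preserved by every accepted move until a genuine $4$-bit swap occurs; that is, no sequence of accepted $2$-bit flips (including neutral moves that survive the tie-broken removal step) can reach maximal diversity. I would prove this by a case analysis over single rematches: any $2$-bit flip on $s$ either keeps the conflict (non-improving), recreates an equally-shared edge (neutral, with the tie-breaking chosen so the structure is restored), or produces a non-maximum matching that is rejected, while flips on the already-disjoint solutions cannot raise the diversity either. With this invariant in hand, the mutations that produce a maximal-diversity population are exactly the improving $4$-bit swaps, of which there are at most $O(r) = O(\sqrt m)$ (one per admissible partner $w$ of $v^*$); each has probability $\tfrac{1}{\mu}\,\Theta(m^{-4})$, so the total per-generation success probability is $O(m^{-3.5})$, and the expected number of generations from $P_w$ is $\Omega(m^{3.5})$.

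The main obstacle is the core lemma, i.e.\ certifying that the trap is genuine: one must rule out that the $(\mu+1)$-EA$_D$ drifts to the optimum along a plateau of equal-contribution $2$-bit moves, which requires controlling both the blocking structure of $P_w$ and the tie-breaking in the removal step so that the $2$-bit local-optimum invariant is maintained throughout. A secondary difficulty is calibrating the gap $g$, the population size $\mu$, and the number of admissible swap partners so that the counting yields exactly the exponent $3.5$ rather than the cruder $\Omega(m^4)$ that a single forced swap would give.
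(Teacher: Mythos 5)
Your proposal follows essentially the same route as the paper's proof: an adversarial near-optimal starting population in which exactly one edge is shared between two solutions, every $2$-bit rematch of the conflicting $R$-vertex is blocked because the candidate edge is already used elsewhere or breaks the matching, and progress therefore requires a specific $4$-bit swap whose per-generation probability works out to $O(m^{-3.5})$ via $|R|=\Theta(\sqrt{m})$ and $|L|-|R|\le\mu$. The only substantive differences are that the paper instantiates the blocking structure concretely with a cyclic-rotation matrix of matchings rather than positing it abstractly, and that the plateau/tie-breaking escape issue you flag as the main obstacle is in fact not addressed in the paper's proof either, which simply bounds the probability of the required $4$-bit flip from the initial population.
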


\begin{proof}
    Consider a bipartite graph \( G = (L \cup R, E) \) with vertex partitions \( L = \{l_1, l_2, \ldots, l_{|L|}\} \) and \( R = \{r_1, r_2, \ldots, r_{|R|}\} \). Define a matrix \( M \in \mathbb{R}^{\mu \times |R|} \) representing solutions to a matching problem, where each row of \( M \) corresponds to a solution, and each column \( j \) (for \( 1 \leq j \leq |R| \)) indicates the match in \( L \) for vertex \( r_j \) in \( R \).

The matrix \( M \) is constructed as follows:
\begin{enumerate}
    \item The first column of \( M \), denoted \( M_{*,1} \), is defined as:
    \[
    M_{*,1} = (l_{|L|}, l_{|L|}, l_{|L|-1}, \ldots, l_{|L|-\mu+2})^T.
    \]
    
    \item For each row \( i \) (for \( 1 \leq i \leq \mu \)), the entries in the row are filled by rotating the elements of \( L \) such that:
    \[
    M_{i,j} = l_{((j+i-2) \mod |L|)} \quad \text{for} \quad 2 \leq j \leq |R|.
    \]

    \item This process results in each row of \( M \) sharing the same sequence of vertices from \( L \), except for the first entry, with a cyclical shift to the right in each subsequent row.
\end{enumerate}

This matrix \( M \) represents distinct solutions for the bipartite graph matching problem, where each row corresponds to a different solution, and each column represents a match between a vertex in \( R \) and a vertex in \( L \), arranged according to the specified rotating pattern.

This matrix exemplifies the construction of solutions, with $\mu=5,|R|=11,|L|=12$ each row depicting a unique solution in the bipartite graph matching problem.

\[
M = \begin{pmatrix}
l_{12} & l_1 & l_2 & l_3 & l_4 & l_5 & l_6 & l_7 & l_8 & l_9\\
l_{12} & l_2 & l_3 & l_4 & l_5 & l_6 & l_7 & l_8 & l_9 &l_1 \\
l_{11} & l_3 & l_4 & l_5 & l_6 & l_7 & l_8 & l_9 & l_1 & l_2 \\
l_{10} & l_4 & l_5 & l_6 & l_7 & l_8 & l_9 & l_1 & l_2 & l_3 \\
l_9 & l_5 & l_6 & l_7 & l_8 & l_9 & l_1 & l_2 & l_3 & l_4 \\
\end{pmatrix}
\]

For each such matrix only the first column has two solutions using the same edge and the distance to optimal diversity is 2. Selecting any solution except these two can't increase the diversity. And for each of these 2 rows there is no value we can change the assignment of $r_1$ to without creating another duplicate edge or creating an invalid matching. Thus we have to change to one of the $|L|-\mu$ edges not part of the row and subsequently deactivate that edge and activate to one of the $|L|-|R|\leq\mu$ edges not used in the row. The probability of doing this is at most $\frac{2}{\mu}\frac{1}{m}\frac{|L|-\mu}{m}\frac{1}{m}\frac{|L|-|R|}{m}\leq \frac{2|R|\mu}{\mu m^4}\leq \frac{2}{m^{3.5}}$. The remaining runtime is $\Omega(m^{3.5})$.
\end{proof}
For the given hard instance, while there is no improving 2-bit flip there is however an improving 4-bit flip of the following form, changing two matches.  We make use of the fact that there is a match we can alter $(l_5\xrightarrow{}l_{10})$ freeing a vertex ($l_5$) we can match to $r_1$, which is unique in all solutions. \[
M = \begin{pmatrix}
\textcolor{red}{l_{5}} & l_1 & l_2 & l_3 & l_4 & \textcolor{red}{l_{10}} & l_6 & l_7 & l_8 & l_9\\
l_{12} & l_2 & l_3 & l_4 & l_5 & l_6 & l_7 & l_8 & l_9 &l_1 \\
l_{11} & l_3 & l_4 & l_5 & l_6 & l_7 & l_8 & l_9 & l_1 & l_2 \\
l_{10} & l_4 & l_5 & l_6 & l_7 & l_8 & l_9 & l_1 & l_2 & l_3 \\
l_9 & l_5 & l_6 & l_7 & l_8 & l_9 & l_1 & l_2 & l_3 & l_4 \\
\end{pmatrix}
\]
In the following theorem we generalize that such a 4-bit flip can always be found.
\begin{theorem}\label{thm:complete_bip_graph_small_gap}
For a complete bipartite graph \( G=((L,R),E) \) where \( |R| < |L| \), let the population size \( \mu \) satisfy \( \mu < \frac{|R|}{2} \) and \( \mu \geq |L|-|R| \). When the \( (\mu + 1) \)-EA\(_D\) is applied to \( G \), the expected time to achieve maximal diversity is bounded by \( O(\mu^2 m^4\log(m)) \).
\end{theorem}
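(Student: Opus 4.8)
The plan is to reuse the drift framework of \Cref{thm:complete_bip_graph_big_gap}: define the same potential $X_t \coloneq \text{div}_{\text{opt}} - \text{div}(t)$, keep the bound $X_0 \leq \binom{\mu}{2}2|R| \leq m^{1.5}$, and keep the estimate $\overline{e} \geq X_t/\mu$ on the number of imperfect edges. The only ingredient that must change is the improving move: when $\mu \geq |L|-|R|$ a single $2$-bit reassignment of a vertex $r \in R$ need not exist, as the hard instance of \Cref{thm:sharp_comp_bip} shows, so I would replace it by an improving $4$-bit flip of the ``double swap'' form illustrated before the statement. Since such a flip prescribes four bits, its probability drops to $\tfrac{1}{\mu}\cdot\tfrac{1}{m^4}(1-\tfrac1m)^{m-4} \geq \tfrac{1}{e\mu m^4}$, and this is exactly what turns the $m^2$ of \Cref{thm:complete_bip_graph_big_gap} into $m^4$.

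The heart of the argument is to show that, as long as the population is not optimal, every imperfect edge gives rise to such a flip. Fix a solution $s$ and an imperfect edge $(r,l)$ of $s$, and let $A = \{\sigma_t(r) : t \in P\}$ be the set of $L$-vertices to which $r$ is matched somewhere in the population; since each solution contributes at most one, $|A| \leq \mu$. First I would dispose of the easy case: if some vertex free in $s$ lies outside $A$, reassigning $r$ to it is an improving $2$-bit flip exactly as in \Cref{thm:complete_bip_graph_big_gap}. Otherwise all $|L|-|R|$ free vertices of $s$ lie in $A$, and I would build a double swap: route $r$ to a currently matched vertex $l^* \notin A$ (so $(r,l^*)$ is unused everywhere), and relocate the displaced vertex $r^* = \sigma_s^{-1}(l^*)$ to a free vertex $l'$ with $(r^*,l')$ unused everywhere.

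The main obstacle is guaranteeing that the second relocation can be carried out simultaneously with the first, i.e. that among the admissible choices of $l^*$ there is one whose occupant $r^*$ can still be moved into a free slot without reusing an edge. I would settle this by counting. Since each $l^* \notin A$ is matched in $s$, there are at least $|L| - |A| \geq |L| - \mu$ candidates, each with a distinct occupant $r^*$. Call $r^*$ \emph{bad} if every vertex free in $s$ is already used by $r^*$ in some solution. Counting pairs (used edge, free vertex of $s$) shows there are at most $(|L|-|R|)(\mu-1)$ of them, while each bad $r^*$ accounts for $|L|-|R|$; hence at most $\mu-1$ vertices are bad. Because $|L| > |R| > 2\mu$ gives $|L|-\mu > \mu-1$, at least one candidate $l^*$ has a non-bad occupant, so the double swap goes through. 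Removing the imperfect edge $(r,l)$ and the edge $(r^*,l^*)$ while adding the two unused edges $(r,l^*)$ and $(r^*,l')$ changes the diversity by $2(i+j-2) \geq 2 > 0$, where $i \geq 2$ and $j\geq 1$ are the multiplicities of the two removed edges, and the offspring is a valid maximum matching of strictly larger diversity.

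Finally I would assemble the drift. Each imperfect edge yields a distinct improving $4$-bit flip of probability at least $\tfrac{1}{e\mu m^4}$ and gain at least $1$, and these events are mutually exclusive, so $E[X_t - X_{t+1}\mid X_t] \geq \tfrac{\overline e}{e\mu m^4} \geq \tfrac{X_t}{e\mu^2 m^4}$. Applying the multiplicative drift theorem (\Cref{thm:multiplicativedrift}) with $\delta = \tfrac{1}{e\mu^2 m^4}$ and $X_0 \leq m^{1.5}$ yields the claimed expected runtime $O(\mu^2 m^4 \log m)$. I expect the existence-of-a-$4$-bit-flip counting to be the delicate step; the drift bookkeeping, the probability estimate, and the bound on $X_0$ are all routine and carried over from \Cref{thm:complete_bip_graph_big_gap}.
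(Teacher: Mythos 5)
Your proposal is correct and its skeleton coincides with the paper's proof: the same potential $X_t$, the same bound $X_0 \leq \binom{\mu}{2}2|R| \leq m^{1.5}$, the same reduction of the drift to an improving $4$-bit ``double swap'' with success probability at least $\frac{1}{e\mu m^4}$, and the same application of \Cref{thm:multiplicativedrift}. Where you genuinely differ is in the combinatorial heart of the theorem, namely the existence of that swap. The paper introduces the set $\overline{L}$ of $L$-vertices matched to $r$ in some solution and a set $R'$ of $R$-vertices, and argues by contradiction that some $r'$ satisfies $M(r')\notin\overline{L}$; as written this secures only the first leg of the swap (a globally unused edge incident to $r$) and leaves largely implicit why the displaced occupant can simultaneously be rerouted to a free vertex along a second globally unused edge. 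Your counting supplies exactly that missing verification: at least $|L|-\mu$ candidate targets $l^*\notin A$ exist, and double-counting pairs of (used edge, vertex free in $s$) shows at most $\mu-1$ of their occupants are ``bad'', so $|L|>|R|>2\mu$ guarantees a usable one. This is the cleaner and more complete treatment of the key step. One caveat you inherit verbatim from \Cref{thm:complete_bip_graph_big_gap}: the chain ``$\overline{e}\geq X_t/\mu$ together with gain at least $1$ per flip'' is not literally valid, since an edge of multiplicity $n_b$ contributes order $n_b(n_b-1)$ to $X_t$, and counting imperfect edges alone only yields $\overline{e}\geq X_t/(2\mu^2)$; to retain the factor $\mu^2$ in the final bound one should credit each flip with its actual diversity gain proportional to $n_b-1$ and use $\sum_b n_b(n_b-1)\leq \mu\sum_b(n_b-1)$. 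Since the paper's own proof has the same wrinkle and the final drift inequality survives under the corrected accounting, this does not affect your conclusion.
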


\begin{proof}
We investigate the expected time for the $(\mu + 1)$-EA$_D$ to maximize diversity in a complete bipartite graph with the given conditions. Initially, we note that for any maximum matching there exist $|L|-|R|$ unmatched vertices from the left partition.

Let $M$ be a maximum matching in $G$. Consider that full diversity is not achieved yet and thus an edge $e_{rl} \in M$ is part of multiple maximum matchings. We define $\overline{L}\subseteq L$ to be the set of vertices in $L$ that are matched to a vertex $r \in R$ in at least one maximum matching. Given that $\mu < \frac{|R|}{2}<\frac{|L|}{2}$ and since a matching pairs each vertex in $R$ with at most one vertex in $L$, there must exist more than $\frac{|L|}{2}$ vertices in $L$ that are not paired with $u$ in any maximum matching. Let $R' \subseteq R$ be the set of vertices in $R$ that are adjacent to these unpaired vertices in $L$.

In the context of the $(\mu + 1)$-EA$_D$, by strategically reassigning the pairs in $M$, we can ensure an increase in diversity without decreasing the matching size. We denote by $M(r)$ the vertex in $L$ to which a vertex $r \in R$ is matched under $M$.

Now, for the sake of contradiction, assume that $\forall r' \in R': M(r') \in \overline{L}$. This would suggest that each vertex in $R'$ is matched to a vertex in $\overline{L}$ under $M$. However, since $\overline{L} < \mu < \frac{|R|}{2}$ and $|R'| > \frac{|R|}{2}$, this situation is not possible.

Therefore, there must exist a vertex $r' \in R'$ such that $M(r') \notin \overline{L}$. This implies that we can activate an edge connecting $r'$ with an unmatched vertex in $L$ and deactivate the edge currently matching $r'$ without reducing the size of the matching, thereby increasing diversity. Just as in \Cref{thm:complete_bip_graph_big_gap} each of those 4-bit flips only decreases or does not change the multiplicities of other edges, since they are both unique edges over all solutions. Also succsesively applying these 4-bit flips at most $\overline{e}$ times will result in optimal diversity,  so $\overline{e}\mu\geq X_t$ holds.

Define $X_t$ to be the difference between the optimal diversity and the current diversity at time $t$. Then, we observe a positive drift in the expected diversity increase per time step, similarly as \Cref{thm:complete_bip_graph_big_gap} which can be bounded below by:

\begin{align*}
E[X_{t} - X_{t+1} \mid X_t] &\geq \frac{\overline{e}}{\mu m^4}\left(1-\frac{1}{m}\right)^{m-4} \geq \frac{X_t}{\mu^2 m^4e}.
\end{align*}

Here, $\frac{1}{\mu}$ represents the probability of selecting the correct individual for reassignment, and the term $\frac{1}{m^4}\left(1-\frac{1}{m}\right)^{m-4}$ accounts for the probability of selecting the appropriate edges for activation and deactivation.

Given that $\binom{\mu}{2}2|R|$ is the maximum diversity, when all edges are pairwise distinct, it holds that \( X_0 \leq \binom{\mu}{2}2|R|\leq \mu^2|R| \leq |R|^3\leq m^{1.5} \), the Multiplicative Drift Theorem provides us with a runtime bound of $O(\mu^2 m^4\log(m))$ to achieve maximum diversity.
\end{proof}

A similar speedup as for the small gap case can be shown by applying the 2P-EA$_D$. 
\begin{theorem}\label{thm:complete_bip_graph_small_gap_efficient}
Given a complete bipartite graph \( G=((L,R),E) \) with \( |R| < |L| \), consider a population size \( \mu \) that fulfills \( \mu < \frac{|R|}{2} \) and \( \mu \geq |L|-|R| \). For the 2P-EA$_D$, the expected time to reach maximal diversity is \( O(\mu^2 m^2\log(m)) \).
\end{theorem}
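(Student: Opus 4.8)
The plan is to mirror the drift argument of \Cref{thm:complete_bip_graph_small_gap}, replacing the analysis of the $(\mu+1)$-EA$_D$'s $4$-bit flip by the corresponding two-vertex rematching step of the 2P-EA$_D$, which is cheaper to realise. I would again set $X_t \coloneq \text{div}_{\text{opt}} - \text{div}(t)$ and recall from the diversity lemma that $X_t = 0$ exactly when all matchings are pairwise edge-disjoint; since all Hamming distances are integral, $X_t \geq 1$ whenever the population is not yet optimal. As the 2P-EA$_D$ only ever accepts an offspring by adding it and then deleting a minimum-contribution individual, the diversity is non-decreasing, so $X_t$ never increases and the Multiplicative Drift Theorem applies once a lower bound on the drift is established.

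The combinatorial content is already supplied by \Cref{thm:complete_bip_graph_small_gap}: whenever $X_t > 0$ there is an imperfect edge, and there is an improving $4$-bit flip that reassigns two right-vertices, matching the imperfect vertex $r$ to a left-vertex $l^\ast$ to which $r$ is matched in no solution (so that $(r,l^\ast)$ is unused across the population) and moving the right-vertex $r''$ currently occupying $l^\ast$ onto a free left-vertex $l_0$, with both newly activated edges unique; moreover the number $\overline{e}$ of imperfect edges satisfies $\overline{e} \geq X_t/\mu$. I would then show that the 2P-EA$_D$ realises one such flip in a single generation with probability $\Omega(1/n^4)$: first select the solution containing the imperfect edge (probability $1/\mu$); then choose $S=\{r,r''\}$ exactly, i.e.\ include these two right-vertices and exclude all other vertices, which happens with probability $\frac{1}{n^2}\left(1-\frac1n\right)^{n-2}\geq\frac{1}{en^2}$; finally, in the rematching phase $r$ and $r''$ must pick their intended partners $l^\ast$ and $l_0$ among their at most $n$ free neighbours, each correct choice having probability at least $1/n$ regardless of the for-loop's processing order. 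Since both vertices are rematched, the offspring is again a maximum matching and passes the quality check. Distinct imperfect edges yield disjoint such events, each improving the diversity by at least one, so summing the per-edge success probability $\frac{1}{\mu}\cdot\frac{1}{en^2}\cdot\frac{1}{n^2}$ over the $\overline e$ imperfect edges gives
\[
E[X_t - X_{t+1}\mid X_t] \geq \frac{\overline{e}}{\mu e n^4} \geq \frac{X_t}{\mu^2 e n^4}.
\]

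To turn this into the stated bound I would exploit the small-gap hypotheses $|L|-|R|\leq \mu < |R|/2$, which force $|L| < \tfrac32|R|$ and hence $n=|L|+|R|=\Theta(|R|)$ while $m=|L|\,|R|=\Theta(|R|^2)$; consequently $n^2=O(m)$ and $n^4=O(m^2)$. Taking $x_{\min}=1$ and $X_0 \leq \binom{\mu}{2}2|R| \leq |R|^3 \leq m^{1.5}$, so that $\ln(X_0/x_{\min}) = O(\log m)$, the Multiplicative Drift Theorem yields $E[T] \leq \mu^2 e n^4\,(1+\ln X_0) = O(\mu^2 m^2 \log m)$.

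I expect the main obstacle to be the rematching bookkeeping rather than the combinatorics: one must check that, whichever order the for-loop visits $r$ and $r''$, a sequence of uniform choices attaining both target partners exists and each factor is at least $1/n$ (the free left-vertices number $|L|-|R|+2\leq n$, and this set shrinks by one after the first rematch but still contains the second target, since $l^\ast=M(r'')$ is vacated by unmatching $r''$ and $l_0$ is originally free). The second delicate point is the passage from $n^4$ to $m^2$, which is precisely where the hypotheses $\mu \geq |L|-|R|$ and $\mu < |R|/2$ enter; without them $n$ and $\sqrt m$ need not be of the same order and the bound would not collapse to $O(\mu^2 m^2 \log m)$.
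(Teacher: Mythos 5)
Your proposal is correct and follows essentially the same route as the paper: the same potential $X_t$, the same per-generation success probability $\Omega\bigl(\tfrac{1}{\mu n^4}\bigr)$ for realising the two-vertex rematch (yielding drift $\geq \tfrac{X_t}{\mu^2 n^4 e}$), the multiplicative drift theorem with $X_0 \leq m^{1.5}$, and the final conversion $n^2 = O(m)$ via $|L|-|R|\leq\mu<|R|/2$. You are merely more explicit than the paper about the combinatorial structure of the improving move and the rematching bookkeeping, which is a welcome addition but not a different argument.
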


\begin{proof}
Consider the $(\mu + 1)$-EA$_D$ applied to a complete bipartite graph $G=((L,R),E)$ under the condition $\mu \geq |L|-|R|$. Define the potential function \( X_t \) as in the previous theorem:
\[ X_t\coloneq \text{div}_{\text{opt}} - \text{div}(t). \]

In this adapted algorithm, we focus on efficiently increasing diversity by unmatching and then rematching only two vertices at a time. This process targets the subset of vertices in $R$ that can be rematched to different vertices in $L$ to increase diversity more effectively.

Let $\overline{e}$ be the number of edges that are shared across different matchings. The expected drift in \( X_t \) per step, considering the efficient selection and rematching process of only two vertices, is given by:
\[ E[X_{t} - X_{t+1} \mid X_t] \geq \frac{\overline{e}}{\mu n^2n^2}\left(1-\frac{1}{n}\right)^{n-2} \geq \frac{X_t}{\mu^2 n^4e}, \]
where the factor \( \frac{1}{\mu n^2} \) accounts for the probability of selecting the right solution and pair of vertices and $\frac{1}{n^2}$ of making a beneficial rematch. The term $\left(1-\frac{1}{n}\right)^{n-2}$ considers the probability of unmatching and rematching exactly two vertices without affecting the others.

Given that $\binom{\mu}{2}2|R|$ is the maximum diversity, when all edges are pairwise distinct, it holds that \( X_0 \leq \binom{\mu}{2}2|R|\leq \mu^2|R| \leq |R|^3\leq m^{1.5}\leq n^3 \), applying the Multiplicative Drift Theorem yields an expected runtime of \( O(\mu^2 n^4\log(n)) \) to achieve maximum diversity. Now since $|L|-|R|\leq\mu<\frac{|R|}{2}$ it holds that $|R|<|L|<1.5|R|$, which implies $O(|L|)=O(|R|)$. Also by definition $n=|L|+|R|$, so $O(n^2)=O(|L||R|)=O(m)$ and we get a bound of \( O(\mu^2 m^2\log(m)) \).
\end{proof}

\section{Runtime Analysis for paths}
This section introduces key theoretical results on paths. We commence with an introduction of useful notation to simplify the following proofs. Subsequently, we present a series of theorems that delineate the expected runtime to achieve this optimal diversity. These theorems compare the performance of the \((\mu+1)\)-EA\(_D\) and 2P-EA\(_D\) algorithms, providing a quantitative basis for assessing their efficacy.
\label{sec4}

In a path with an even number of edges, such as when \( m = 6 \), there are multiple ways to form a maximum matching. Each maximum matching includes exactly three edges, ensuring that no two edges in the matching share a vertex. The notation \( E^iO^j \) is used to represent these matchings, where \( i \) and \( j \) denote the number of edges with even and odd indices in the matching, respectively. The detailed proof is given in the following Lemma.
\begin{lemma}\label{lemma:number_path_matchings}
 The number of different maximum matchings on a path with $m$ edges is $\frac{m}{2}+1$ for $m$ even and $1$ for $m$ odd and each is of size $\lceil\frac{m}{2}\rceil$. Also for even $m$ each maximum matching can be described as $E^iO^{\frac{m}{2}-i}$. For $m$ odd the unique solution has the form $E^{\lceil\frac{m}{2}\rceil}O^{0}$.
\end{lemma}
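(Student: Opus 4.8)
The plan is to classify maximum matchings by the positions of their \emph{unmatched} vertices along the path and then translate that classification into the $E^iO^j$ notation and into a count. Write the path as $v_0v_1\cdots v_m$, so that it has $m+1$ vertices and its edges inherit the even/odd index labelling used to define $E^iO^j$. First I would record the size claim: since there are $m+1$ vertices, every matching has size at most $\lfloor (m+1)/2\rfloor=\lceil m/2\rceil$, and exhibiting one matching that attains this (e.g. every other edge starting from one end) shows the maximum matching size is exactly $\lceil m/2\rceil$ in both parities of $m$. The whole argument then rests on one reusable forcing lemma: a path on an \emph{even} number of vertices has a \emph{unique} perfect matching, proved by left-to-right forcing --- the degree-one endpoint $v_0$ can only be matched along its single incident edge, deleting that matched pair returns an even path, and induction forces every second edge.

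For $m$ odd the path has an even number $m+1$ of vertices, so a maximum matching of size $(m+1)/2$ is perfect; the forcing lemma gives uniqueness immediately. Tracking which edges the forced matching uses shows it consists of $\lceil m/2\rceil$ edges all of a single parity, which is exactly the claimed form $E^{\lceil m/2\rceil}O^0$ under the indexing convention fixed for the notation.

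For $m$ even the path has an odd number $m+1$ of vertices, so a maximum matching of size $m/2$ leaves exactly one vertex $v_p$ unmatched. The content-bearing step is a bijection between maximum matchings and the position $p$ of this lone unmatched vertex. I would first argue that $p$ must be even: no edge of $M$ can be incident to $v_p$, so $M$ restricted to the two subpaths $v_0\cdots v_{p-1}$ and $v_{p+1}\cdots v_m$ must perfectly match each of them, which forces both $p$ and $m-p$ to be even. Conversely, for each even $p\in\{0,2,\dots,m\}$ the two flanking subpaths have even length and hence, by the forcing lemma, a unique perfect matching each, producing exactly one maximum matching with $v_p$ unmatched. Counting the $m/2+1$ admissible positions gives the stated count, and reading off the parities shows the left subpath contributes edges of one parity and the right subpath edges of the other, so the matching is $E^iO^{m/2-i}$ with $i$ determined by $p$ and ranging over all of $\{0,\dots,m/2\}$ as $p$ sweeps the even positions.

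The main obstacle is precisely the even case: showing that the single unmatched vertex is forced to an even index and that its position determines the matching uniquely. Everything else --- the size bound, the odd-case uniqueness, and the parity bookkeeping converting vertex positions into $E^iO^j$ labels --- follows routinely once the forcing lemma for even subpaths is in hand, so I would isolate and prove that lemma first and then invoke it in both cases.
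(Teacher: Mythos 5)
Your proof is correct, but it takes a genuinely different route from the paper. The paper proves the lemma by induction on the number of edges $m$: in the inductive step it observes that a maximum matching must contain either the last or the second-to-last edge, and reduces to a shorter path in each case, with separate bookkeeping for the parities of $m$ and $m+1$. You instead prove a single reusable forcing lemma (a path on an even number of vertices has a unique perfect matching, by matching the degree-one endpoint and recursing) and then classify maximum matchings by the position $p$ of the unique unmatched vertex in the even-$m$ case, obtaining an explicit bijection between maximum matchings and the $\frac{m}{2}+1$ even positions $p\in\{0,2,\dots,m\}$; the odd-$m$ case is immediate since the matching is perfect. Your approach buys a cleaner structural explanation of \emph{why} every maximum matching has the form $E^iO^{\frac{m}{2}-i}$ --- the left flanking subpath forces the even-indexed edges $e_0,e_2,\dots,e_{p-2}$ and the right flanking subpath forces the odd-indexed edges $e_{p+1},\dots,e_{m-1}$, with $i=p/2$ --- whereas the paper's induction establishes the same form only implicitly through the extension of smaller solutions. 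The paper's induction, on the other hand, directly mirrors the recursive structure used again in Lemma~\ref{lemma:diverse_optimal_path}, which is presumably why the authors chose it. Both arguments are complete; the one point you should make explicit when writing yours up is the observation (which you do state) that no path edge can ``jump over'' $v_p$, so that $M$ genuinely splits into perfect matchings of the two flanking subpaths, including the degenerate cases $p=0$ and $p=m$ where one subpath is empty.
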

\begin{proof}
We approach the proof of this lemma by employing induction to verify the claim regarding the number and arrangement of maximum matchings in path graphs of varying edge counts.\\
\textbf{Base Case ($m=1$,$m=2$):}\\
Clearly for $m=1$ there is only one solution consisting of one edge with index $0$, so the unique solution is $E^1O^0$. For $m=2$ only one of both edges of the path can be part of the maximum matching so the maximum matchings are $E^1O^0$ or $E^0O^1$.\\
\textbf{Inductive Step:}\\
In a maximum matching of size $m+1$ either the last or the second to last edge of the path has to be included, else we could increase the size by including the last edge.\\
Case 1: $(m+1)$ even\\
 If the last edge is part of the matching, then the first $m-1$ edges must also form a maximum matching, since the choice of being in the matching is independent of the last two edges. By the induction hypothesis we can extend each maximum matching on the $m-1$ edges by $O$.
If we instead include the second to last edge of the past, then the last and third to last edge of the path can't be part of the matching, while the remaining $m-2$ edges are independent of the choice and must thus also form a maximum matching. The remaining path is then odd and thus has a unique maximum matching , so inductively the only maximum matching of this form is $E^0O^{\frac{m+1}{2}}$
\\
Case 2: $(m+1)$ odd\\
 If the last edge is part of the matching, then by the induction hypothesis the maximum matching for the $m-1$ remaining edges is unique and thus the maximum matching including the last edge of even index is $E^0O^{\lceil\frac{m+1}{2}\rceil}$.\\
 If we instead include the second to last edge of the past, then the last and third to last edge of the path can't be part of the matching, while the remaining $m-2$ edges are independent of the choice and must thus also form a maximum matching. The remaining path is then even and by the induction hypothesis each matching will have $\frac{m-2}{2}+1$ edges, which is not maximum since by instead including the last edge we obtain a matching of size $\frac{m}{2}+1$.
\end{proof}

With an even number of edges, such as $m=6$, there are the following maximum matching configurations, represented as (matching edges in red)
\paragraph{Matching \( E^3O^0 \):}
\begin{tikzpicture}[baseline={([yshift=.5ex]current bounding box.center)}]
    \draw (1,0) -- (7,0); 
    \foreach \x in {1,...,7} {
        \draw[fill=black] (\x,0) circle (2pt);
    }
    \foreach \x [evaluate=\x as \evalx using int(\x-1)] in {1,...,6} {
        \draw (\x,0) -- node[below] {\evalx} (\x+1,0);
    }
    \draw[red, thick, shorten <=2pt, shorten >=2pt] (1,0) -- (2,0); 
    \draw[red, thick, shorten <=2pt, shorten >=2pt] (3,0) -- (4,0);
    \draw[red, thick, shorten <=2pt, shorten >=2pt] (5,0) -- (6,0);
\end{tikzpicture}

\paragraph{Matching \( E^2O^1 \):}
\begin{tikzpicture}[baseline={([yshift=.5ex]current bounding box.center)}]
    \draw (1,0) -- (7,0); 
    \foreach \x in {1,...,7} {
        \draw[fill=black] (\x,0) circle (2pt);
    }
    \foreach \x [evaluate=\x as \evalx using int(\x-1)] in {1,...,6} {
        \draw (\x,0) -- node[below] {\evalx} (\x+1,0);
    }
    \draw[red, thick, shorten <=2pt, shorten >=2pt] (1,0) -- (2,0); 
    \draw[red, thick, shorten <=2pt, shorten >=2pt] (3,0) -- (4,0);
    \draw[red, thick, shorten <=2pt, shorten >=2pt] (6,0) -- (7,0);
\end{tikzpicture}

\paragraph{Matching \( E^1O^2 \):}
\begin{tikzpicture}[baseline={([yshift=.5ex]current bounding box.center)}]
    \draw (1,0) -- (7,0); 
    \foreach \x in {1,...,7} {
        \draw[fill=black] (\x,0) circle (2pt);
    }
    \foreach \x [evaluate=\x as \evalx using int(\x-1)] in {1,...,6} {
        \draw (\x,0) -- node[below] {\evalx} (\x+1,0);
    }
    \draw[red, thick, shorten <=2pt, shorten >=2pt] (1,0) -- (2,0); 
    \draw[red, thick, shorten <=2pt, shorten >=2pt] (4,0) -- (5,0);
    \draw[red, thick, shorten <=2pt, shorten >=2pt] (6,0) -- (7,0);
\end{tikzpicture}

\paragraph{Matching \( E^0O^3 \):}
\begin{tikzpicture}[baseline={([yshift=.5ex]current bounding box.center)}]
    \draw (1,0) -- (7,0); 
    \foreach \x in {1,...,7} {
        \draw[fill=black] (\x,0) circle (2pt);
    }
    \foreach \x [evaluate=\x as \evalx using int(\x-1)] in {1,...,6} {
        \draw (\x,0) -- node[below] {\evalx} (\x+1,0);
    }
    \draw[red, thick, shorten <=2pt, shorten >=2pt] (2,0) -- (3,0); 
    \draw[red, thick, shorten <=2pt, shorten >=2pt] (4,0) -- (5,0);
    \draw[red, thick, shorten <=2pt, shorten >=2pt] (6,0) -- (7,0);
\end{tikzpicture}\\
With an odd number of edges, such as \( m = 5 \), there is only one maximum matching configuration, represented as
\paragraph{Matching \( E^3O^0 \):}
\begin{tikzpicture}[baseline={([yshift=.5ex]current bounding box.center)}]

    \draw (1,0) -- (6,0); 
    \foreach \x in {1,...,6} {
        \draw[fill=black] (\x,0) circle (2pt);
    }
    \foreach \x [evaluate=\x as \evalx using int(\x-1)] in {1,...,5} {
        \draw (\x,0) -- node[below] {\evalx} (\x+1,0);
    }
    \draw[red, thick, shorten <=2pt, shorten >=2pt] (1,0) -- (2,0); 
    \draw[red, thick, shorten <=2pt, shorten >=2pt] (3,0) -- (4,0);
    \draw[red, thick, shorten <=2pt, shorten >=2pt] (5,0) -- (6,0);
\end{tikzpicture} 
\\\\
In each case, every vertex is incident to at most one matching edge, and the \( E^iO^j \) notation describes the composition of the matching in terms of even and odd-indexed edges.

The following Lemma characterizes the conditions for maximal diversity within a population using this notation.

\begin{lemma}[Diversity of a Population]
\label{lemma:diverse_optimal_path}
The population with optimum diversity for even $\mu$ contains for each $j$ from $0$ to $\lfloor\frac{\mu}{2}\rfloor-1$ the individuals $E^{j}O^{\frac{m}{2}-j}$ and $E^{\frac{m}{2}-j}O^{j}$. For odd $\mu$ and $\lfloor\frac{\mu}{2}\rfloor\leq k\leq\frac{m}{2}-\lfloor\frac{\mu}{2}\rfloor$ it further contains any one individual of the form $E^kO^{\frac{m}{2}-k}$.
\end{lemma}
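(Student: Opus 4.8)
The plan is to reduce the diversity maximization to a one-dimensional point-placement problem. First I would use Lemma~\ref{lemma:number_path_matchings}, which for even $m$ identifies the maximum matchings with the family $E^iO^{m/2-i}$ for $0\le i\le m/2$, and make the edge sets explicit: $E^iO^{m/2-i}$ consists of the even-indexed edges $0,2,\dots,2i-2$ together with the odd-indexed edges $2i+1,2i+3,\dots,m-1$ (one checks this against the $m=6$ pictures). Comparing $E^iO^{m/2-i}$ with $E^{i'}O^{m/2-i'}$ for $i<i'$, the symmetric difference is exactly the even edges $2i,\dots,2i'-2$ and the odd edges $2i+1,\dots,2i'-1$, so $H(E^iO^{m/2-i},E^{i'}O^{m/2-i'})=2|i-i'|$. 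This linear embedding is the crucial structural fact: the matchings sit equally spaced on a line indexed by $i\in\{0,\dots,m/2\}$, so maximizing $D(\tilde P)$ amounts to choosing $\mu$ distinct integers $x_1<\dots<x_\mu$ in $\{0,\dots,m/2\}$ to maximize $\sum_{i<j}|x_i-x_j|$ up to a global constant. (Duplicates contribute $0$ to $D$ and only shrink $\tilde P$, so an optimum is attained using $\mu$ pairwise-distinct indices.)

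Next I would rewrite the objective in terms of gaps. Setting $g_k=x_{k+1}-x_k\ge 1$ for $1\le k\le\mu-1$, a count of how many of the $\binom{\mu}{2}$ pairs straddle the $k$-th gap yields $\sum_{i<j}(x_j-x_i)=\sum_{k=1}^{\mu-1}k(\mu-k)\,g_k$, to be maximized subject to $g_k\ge 1$ and $\sum_k g_k=x_\mu-x_1\le m/2$. Since every coefficient $k(\mu-k)$ is strictly positive, the span must be maximal, forcing $x_1=0$ and $x_\mu=m/2$; the leftover slack $m/2-(\mu-1)$ is then to be distributed among the gaps. Because $k(\mu-k)$ is strictly unimodal in $k$, all slack is pushed onto the gap(s) carrying the largest coefficient, while every other gap is pinned to its minimum value $1$.

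Finally I would split on the parity of $\mu$. For even $\mu$ the coefficient $k(\mu-k)$ has a unique maximizer $k=\mu/2$, so all slack enters the single central gap; this pins $x_1,\dots,x_{\mu/2}$ to the consecutive indices $0,\dots,\mu/2-1$ and $x_{\mu/2+1},\dots,x_\mu$ to $m/2-\mu/2+1,\dots,m/2$, which in the $E^iO^j$ notation are precisely $E^{j}O^{m/2-j}$ and $E^{m/2-j}O^{j}$ for $0\le j\le\mu/2-1$. For odd $\mu$ the indices $k=(\mu-1)/2$ and $k=(\mu+1)/2$ tie for the largest coefficient $(\mu^2-1)/4$, so the slack may be split arbitrarily between these two central gaps; the bottom and top clusters are pinned to $0,\dots,\lfloor\mu/2\rfloor-1$ and $m/2-\lfloor\mu/2\rfloor+1,\dots,m/2$ exactly as before, while the one remaining middle index $k$ is free within the two constraints $g_{(\mu-1)/2}\ge 1$ and $g_{(\mu+1)/2}\ge 1$, which translate to $\lfloor\mu/2\rfloor\le k\le m/2-\lfloor\mu/2\rfloor$ and correspond to a free choice of a single individual $E^kO^{m/2-k}$. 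Translating these index sets back into matchings recovers exactly the populations claimed in the statement.

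The main obstacle I anticipate is the odd-$\mu$ case: one must argue both that the two central gaps are genuinely interchangeable, so the middle individual is truly free across the whole stated range, and that no alternative distribution of slack can match the optimum, which hinges on the strict unimodality of $k(\mu-k)$ away from the centre. A secondary point to dispatch cleanly is feasibility, namely that the top and bottom clusters do not overlap and that $\mu$ distinct maximum matchings exist; this holds precisely when the index interval $\lfloor\mu/2\rfloor\le k\le m/2-\lfloor\mu/2\rfloor$ is nonempty, i.e.\ under the implicit assumption $\mu\le m/2+1$.
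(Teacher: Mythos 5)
Your proposal is correct, but it takes a genuinely different route from the paper. The paper proves the lemma by induction on $\mu$: it first argues (by an exchange/contradiction step) that the two extreme matchings $E^0O^{m/2}$ and $E^{m/2}O^0$ must belong to any optimal population, then peels off the two outermost edges on each side and applies the induction hypothesis to the remaining $\mu-2$ individuals on the inner path of $m-4$ edges. You instead exploit the linear structure directly: the identity $H\bigl(E^iO^{m/2-i},E^{i'}O^{m/2-i'}\bigr)=2|i-i'|$ turns the problem into placing $\mu$ distinct integers in $\{0,\dots,m/2\}$ to maximize the sum of pairwise distances, and the gap decomposition $\sum_{i<j}(x_j-x_i)=\sum_k k(\mu-k)g_k$ reduces this to a transparent linear program whose optima are exactly the claimed populations. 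Your approach buys several things the paper's does not: it yields the exact optimal diversity value in closed form, it characterizes \emph{all} optima in one pass rather than by recursion, and it makes the freedom of the middle individual for odd $\mu$ an immediate consequence of the tie $k(\mu-k)$ at $k=(\mu\pm1)/2$ rather than something inherited through the induction; it also makes explicit the feasibility condition $\mu\le m/2+1$ that the paper leaves implicit. The paper's induction is more self-contained in that it never needs the distance formula, but its key exchange step (``changing the $i$th edge increases diversity'') is asserted without the computation that your coefficient analysis supplies. Both arguments establish the lemma; yours is arguably the cleaner and more informative one.
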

\begin{proof} We approach the proof of this lemma by employing induction to verify the claim regarding the number and arrangement of maximum matchings in path graphs of varying population sizes.
\hfill\\
\textbf{Base Case ($\mu = 1$, $\mu=2$):}\\
For $\mu = 1$ any solution maximizes the diversity of $0$.
For $\mu = 2$, the population with maximum diversity contains $E^0O^{\frac{m}{2}}$ and $E^{\frac{m}{2}}O^0$ with maximum diversity of $m$. Suppose that there exists another maximum matching population of size $2$, since the diversity has to be $m$, if the first matching is $M$ then the second matching must be the complement $M\setminus E$. As soon as edges with both even and odd indices are part of $M$, either $M$ or $M\setminus E$ does not have the form $E^iO^{\frac{m}{2}-i}$ and can't be a valid maximum matching by Lemma 12.\\
\textbf{Inductive Step:}\\
Suppose by way of contradiction that $E^0O^{\frac{m}{2}}$ is not part of the population. Then there exists an $i\geq 1$ such that for all solutions in the population the first $i$ edges with even index are part of the solution and the $(i+1)$th edge of one solution has odd index. By changing the $i$th edge of the solution to also be of odd index we would increase the diversity, which contradicts the assumption of having maximum diversity. Analogously this holds for   $E^{\frac{m}{2}}O^0$. Since all individuals are distinct all other solutions must start with an even edge and end with an odd edge. To the remaining $\mu-2$ individuals restricted on the inner $m-4$ edges we can then apply the Induction Hypothesis, so  for even $\mu$ the population further contains for each $j$ from $0$ to $\lfloor\frac{\mu}{2}\rfloor-2$ the individuals $$EE^{j}O^{\frac{m-4}{2}-j}O=E^{j+1}O^{\frac{m}{2}-j-1}$$ and $$EE^{\frac{m-4}{2}-j}O^{j}O=E^{\frac{m}{2}-j-1}O^{j+1}$$. For odd $\mu$ and $\lfloor\frac{\mu}{2}\rfloor\leq k\leq\frac{m}{2}-\lfloor\frac{\mu}{2}\rfloor$ it further contains any one individual of the form $E^{k+1}O^{\frac{m}{2}-k-1}$.

\end{proof}

Building up on this, in the following theorem we show that there is always a local improvement, needing $2$ bit flips, to improve diversity.
\begin{theorem} \label{theorem:max_matchings_paths}
In the $(\mu + 1)$-EA$_D$ applied to a path with $m$ edges, the expected time until the diversity is maximized is $O(\mu^3m^3)$.
\end{theorem}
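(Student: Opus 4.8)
The plan is to apply the additive drift theorem (\Cref{thm:additivedrift}), so the whole argument reduces to exhibiting, in every non-optimal population, a single improving $2$-bit flip whose diversity gain is bounded below by a positive constant. First I would dispose of odd $m$: by \Cref{lemma:number_path_matchings} an odd path has a unique maximum matching, so diversity is trivially maximal. For even $m$ I encode each maximum matching $E^iO^{m/2-i}$ by its index $i\in\{0,\dots,m/2\}$ and record the elementary fact that $H(E^iO^{m/2-i},E^{i'}O^{m/2-i'})=2|i-i'|$, since the symmetric difference of the two edge sets consists of $|i-i'|$ even edges and $|i-i'|$ odd edges. Consequently the population is a multiset of indices, its diversity depends only on the set $T$ of occupied indices, an index-$i$ matching can reach only $i\pm1$ by a $2$-bit flip (any other $2$-bit flip leaves the family of maximum matchings and is rejected by the quality filter), and, writing the occupied indices in sorted order $a_1<\dots<a_{|T|}$, the diversity equals $\sum_i w_i a_i$ up to a constant factor, with weights $w_i=2i-|T|-1$. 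Since every pairwise distance is at most $m$, the optimum described by \Cref{lemma:diverse_optimal_path} satisfies $\mathrm{div}_{\mathrm{opt}}\le\mu(\mu-1)m$, so the potential $X_t:=\mathrm{div}_{\mathrm{opt}}-\mathrm{div}(t)$ starts at $X_0=O(\mu^2m)$.

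The core step is to show that whenever $\mathrm{div}(t)<\mathrm{div}_{\mathrm{opt}}$ there is an index-$\pm1$ move whose gain is at least $2$. The essential point, which I would stress, is that the $(\mu+1)$-EA$_D$ deletes the globally least-contributing individual rather than the parent, so a duplicated matching acts as a free slot. I distinguish two cases by the number $|T|$ of distinct matchings. If $|T|<\mu$, some index is duplicated, and because $|T|<\mu\le m/2+1$ some occupied index has an unoccupied neighbour $p$; producing the offspring at $p$ and deleting a duplicate (contribution $0$) enlarges $T$ by $\{p\}$ and raises the diversity by $c(p)=\sum_{b\in T}H(p,b)\ge 2$. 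If $|T|=\mu$ but the population is not optimal, then by \Cref{lemma:diverse_optimal_path} one of the two halves is not packed against its extreme, so there is a boundary index $a_j$ with $w_j\neq0$ and an adjacent empty slot towards its extreme; spawning the offspring there and removing the least contributor yields a gain of at least $2|w_j|\ge2$. The main obstacle is precisely this case analysis: one must verify that a single beneficial move always exists and, crucially, that deleting the minimum contributor (not the parent) captures at least the gain of the intended swap, which is where the combinatorics of duplicates and gaps has to be handled with care.

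Finally I would assemble the drift. Exactly as in \Cref{thm:complete_bip_graph_big_gap}, adding an offspring and removing the least contributor never decreases the diversity and realises at least the gain of replacing the parent, so the move above is genuinely accepted. The probability of selecting the right individual is $1/\mu$, and the probability of flipping exactly the two required bits is $\frac{1}{m^2}\left(1-\frac1m\right)^{m-2}\ge\frac{1}{em^2}$, so $E[X_t-X_{t+1}\mid X_t]\ge\frac{2}{e\mu m^2}$ whenever $X_t>0$. Applying \Cref{thm:additivedrift} with $\delta_1=\Omega\!\left(1/(\mu m^2)\right)$ and $X_0=O(\mu^2m)$ gives $E[T]\le X_0/\delta_1=O(\mu^3m^3)$, as claimed.
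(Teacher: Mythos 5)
Your proposal is correct and follows essentially the same route as the paper's proof: additive drift driven by a guaranteed improving $2$-bit flip, with the same case split between populations containing duplicates and distinct-but-unpacked populations, the same drift bound $\Omega(1/(\mu m^2))$, and the same initial potential $O(\mu^2 m)$. Your index/weight formalization ($w_i = 2i-|T|-1$) and your explicit verification that deleting the minimum contributor realises at least the gain of replacing the parent make the argument somewhat tighter than the paper's, but do not constitute a different approach.
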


\begin{proof}
We consider a path graph with an even number of edges $m$, where multiple maximum matchings are possible. The maximum matching is unique when $m$ is odd, hence the maximum diversity is trivially obtained in that case. Therefore, our analysis focuses on when $m$ is even.

Within a population, suppose there is duplication. By \Cref{lemma:diverse_optimal_path} it follows that there exists at least one individual for which the first $i \geq 0$ matched edges have even indices without another individual having the first $i+1$ matched edges with even indices, or an individual where the last $i \geq 0$ matched edges have odd indices without another individual having the last $i+1$ matched edges with odd indices.

Considering that the total number of distinct maximum matchings for a path with $m$ edges exceeds $\mu$, the likelihood of choosing an individual from the current population and correctly flipping two edges to enhance diversity is at least $\frac{1}{\mu}\frac{1}{m^2}(1-\frac{1}{m})^{m-2}$. This lower bound on the probability yields a diversity improvement of at least 1.

If the population has not reached maximal diversity but consists of pairwise distinct maximum matchings, then there must exist a maximal $0\leq j\leq \lfloor\frac{\mu}{2}-1\rfloor$ such that $E^jO^{\frac{m}{2}-j}$ or $E^{\frac{m}{2}-j}O^j$ is not present in the population. W.l.og. let this be $E^jO^{\frac{m}{2}-j}$. We focus on the individual $E^kO^{\frac{m}{2}-k},k<j$ with most odd edges. By applying a 2-bit flip we get $E^{k-1}O^{\frac{m}{2}-k+1}$. The diversity change, by replacing the parent, would be only determined by this edge change. This new odd edge is already used by $j$ matchings, since $j$ is maximal, and only those since else $E^kO^{\frac{m}{2}-k}$ would not have the most odd edges of the remaining population. By symmetry the deactivated even edge is used in $\mu-j$ solutions (excluding the parent). Thus the change in diversity by replacing the parent would be $\mu-j-j=\mu-2j$. By choice of $j$ this is strictly positive.  Since replacing the parent is possible, the diversity increase is at least of this size.
Let $X_t$ denote the difference between the optimal diversity and the current diversity at time $t$. The possibility of enhancing diversity via a two-bit flip provides us with a drift given by
$$E[X_{t}-X_{t+1} \mid X_t] \geq \frac{1}{\mu m^2}\left(1-\frac{1}{m}\right)^{m-2}\geq  \frac{1}{\mu m^2e}.$$
Since the initial diversity deficit $X_0$ is at most $m\mu^2$ (each pair of solutions can have a hamming distance of at most $m$), applying the additive drift theorem results in a runtime estimation of $O(\mu^3m^3)$.
\end{proof}
\begin{theorem} \label{theorem:max_matchings_paths_2p}
In the 2P-EA$_D$ applied to a path with $m$ edges, the expected time until the diversity is maximized is $O(\mu^3m^2)$.
\end{theorem}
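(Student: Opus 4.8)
The plan is to mirror the additive-drift argument of \Cref{theorem:max_matchings_paths}, reusing its structural analysis of improving moves, but replacing the $(\mu+1)$-EA$_D$ probability $\Theta(1/m^2)$ of a specific two-bit flip by the much larger probability that the two-phase operator realizes the same move. As before I would restrict to even $m$ (for odd $m$ the maximum matching is unique by \Cref{lemma:number_path_matchings}, so diversity is trivially maximal), and define the potential $X_t \coloneq \mathrm{div}_{\mathrm{opt}} - \mathrm{div}(t)$, noting $X_0 \leq m\mu^2$ since each of the at most $\mu^2$ ordered pairs contributes Hamming distance at most $m$.

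The key structural observation I would establish is that every improving move identified in \Cref{theorem:max_matchings_paths} is a single-vertex unmatch-and-rematch. Recall that whenever $X_t > 0$ that proof produces an individual $E^{k}O^{m/2-k}$ whose adjacent matching $E^{k-1}O^{m/2-k+1}$ (or its mirror image) raises diversity by at least $1$. On a path two maximum matchings differ in exactly two bits precisely when they are adjacent in this ordering, and such a transition only moves the unmatched vertex two positions, say from $v_{2k+1}$ to $v_{2k-1}$: one deactivates $e_{2k-2}$ and activates $e_{2k-1}$. This is exactly the effect of unmatching the middle vertex $v_{2k}$ (which frees both $v_{2k-1}$ and $v_{2k}$, while $v_{2k+1}$ is already free) and then rematching $v_{2k}$ to the neighbour $v_{2k+1}$, leaving $v_{2k-1}$ unmatched. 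The result is again a maximum matching of size $m/2$, so it meets the quality criterion.

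I would then lower-bound the probability that the 2P-EA$_D$ performs this move in one generation: pick the correct parent (probability $1/\mu$); choose $S=\{v_{2k}\}$, i.e.\ include that single vertex and exclude all others, which has probability $\tfrac{1}{n}\bigl(1-\tfrac{1}{n}\bigr)^{n-1} \geq \tfrac{1}{en}$ with $n=m+1$; and in the rematch phase select $v_{2k+1}$ among the (exactly two) free neighbours of $v_{2k}$, which has probability $1/2$. Multiplying gives probability at least $\tfrac{1}{2e\mu n}$, and hence an expected drift $E[X_t - X_{t+1}\mid X_t] \geq \tfrac{1}{2e\mu n}$, since a successful step raises diversity by at least $1$ (the algorithm deletes the minimum-contribution individual, which by $c(z)\le c(s)$ loses no more diversity than deleting the parent would). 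Applying the Additive Drift Theorem (\Cref{thm:additivedrift}) with $X_0 \leq m\mu^2$ and $\delta = \tfrac{1}{2e\mu n}$ yields $E[T] \leq 2e\mu n\cdot m\mu^2 = O(\mu^3 m^2)$, using $n = \Theta(m)$.

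I expect the main obstacle to be the second step: verifying uniformly that every improving transition from \Cref{theorem:max_matchings_paths} — including those that merely reduce duplication rather than fill a missing configuration — is genuinely a single middle-vertex move, and that after unmatching that vertex it always has two free neighbours so the $1/2$ rematch bound holds. In particular the boundary matchings $E^0O^{m/2}$ and $E^{m/2}O^0$ and transitions near the ends of the path must be checked, although a direct inspection confirms that the middle vertex always has precisely the originally-unmatched endpoint on one side and the newly-freed endpoint on the other.
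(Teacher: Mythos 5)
Your proposal is correct and follows essentially the same route as the paper's proof: both reduce the improving transition $E^kO^{m/2-k}\to E^{k\mp 1}O^{m/2-k\pm 1}$ to a single-vertex unmatch-and-rematch, bound its probability by $\frac{1}{\mu}\cdot\frac{1}{n}\left(1-\frac{1}{n}\right)^{n-1}\cdot\frac{1}{2}\geq\frac{1}{2e\mu n}$, and apply the Additive Drift Theorem with $X_0\leq m\mu^2$ to obtain $O(\mu^3 m^2)$. Your write-up is in fact somewhat more explicit than the paper's about why the rematch succeeds with probability $1/2$ and why the selection step still yields a diversity gain of at least $1$.
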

\begin{proof}
Since the proof follows closely the arguments presented in \Cref{theorem:max_matchings_paths}, we will focus only on the different bounds on drift, which is the main differing element.

Any maximum matching $E^jO^{\frac{m}{2}-j},j>0$ can be chosen with probability $\frac{1}{\mu}$ and be mutated to $E^{j-1}O^{\frac{m}{2}-j+1}$  by unmatching the jth vertex and rematching him with probability $\frac{1}{2}$ to his unmatched left neighbour. Since all previous edges have to be of even index this neighbour must be unmatched. Analogously it  holds for $E^jO^{\frac{m}{2}-j},j<m-1$ to $E^{j+1}O^{\frac{m}{2}-j-1}$. For both the case of having duplicates or not being optimal in \Cref{theorem:max_matchings_paths} we make use of such a local edge swap. The drift is therefore given by
\[
E[X_{t}-X_{t+1} \mid X_t] \geq \frac{1}{\mu n2}\left(1-\frac{1}{n}\right)^{n-1}\geq \frac{1}{\mu n2e}.
\]
Where $\left(1-\frac{1}{n}\right)^{n-1}$ is the probability of not rematching any other vertex. Given that the initial diversity deficit $X_0$ is at most $m\mu^2$ (each pair of solutions can have a hamming distance of at most $m$), the additive drift theorem provides an upper bound on the expected run time of $O(\mu^3m^2)$, since $m=n-1$.
\end{proof}

\begin{figure*}
\begin{subfigure}{0.5\textwidth}
    \centering
    \includegraphics[width=\linewidth]{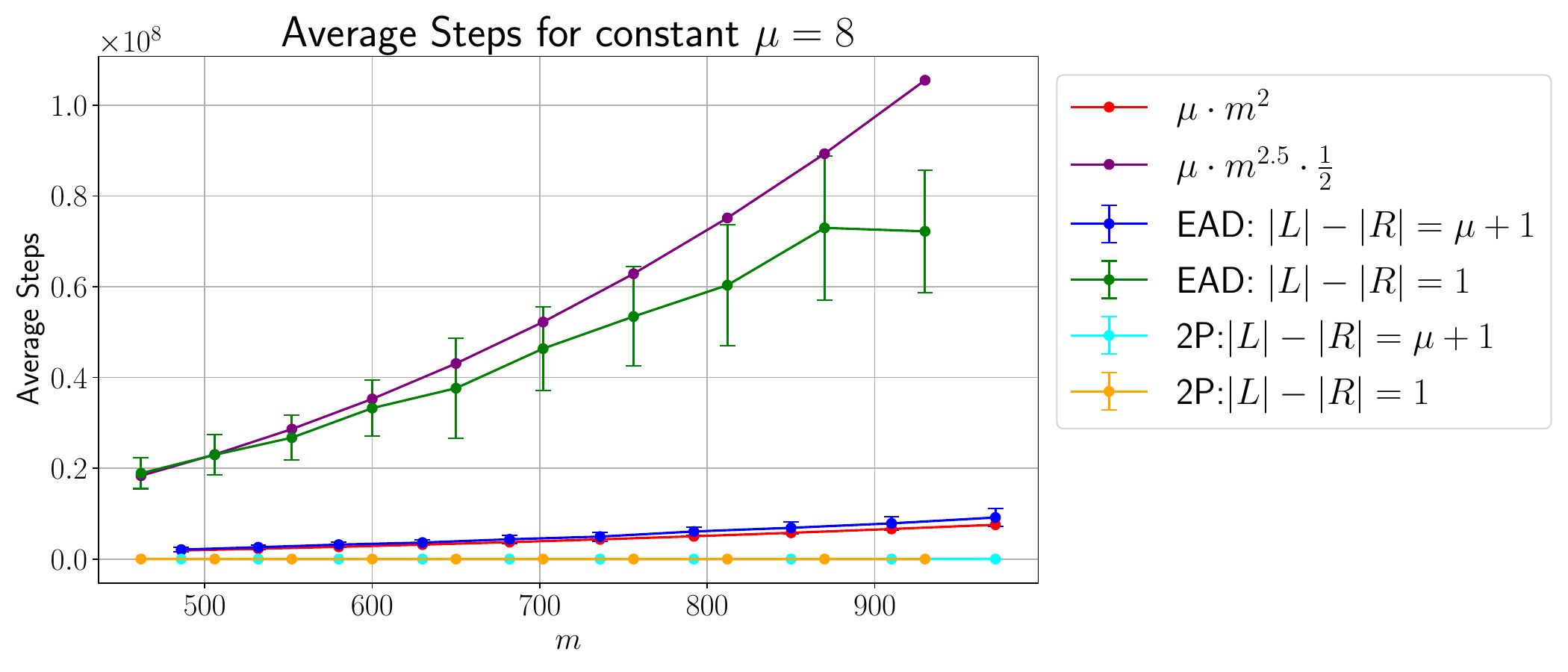}
    \caption{EA$_D$ and 2P with $\mu=8$ in Comp. Bip. Graphs}
    \label{fig:mu-fixed}
\end{subfigure}
\begin{subfigure}{0.5\textwidth}
    \centering
    \includegraphics[width=\linewidth]{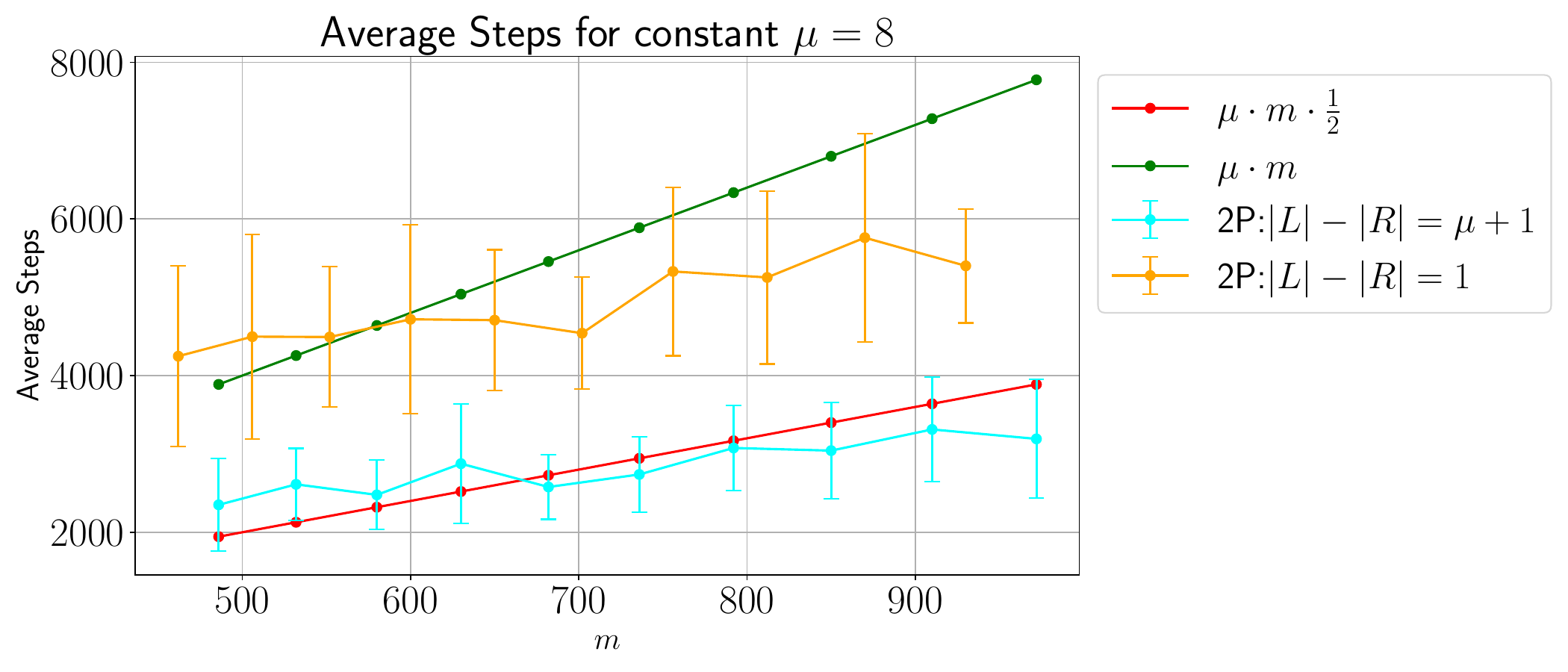}
    \caption{2P with fixed $\mu=8$ in Comp. Bip. Graphs}
    \label{fig:mu-fixed-2p}
\end{subfigure}
\\ 
\begin{subfigure}{0.5\textwidth}
    \centering
    \includegraphics[width=\linewidth]{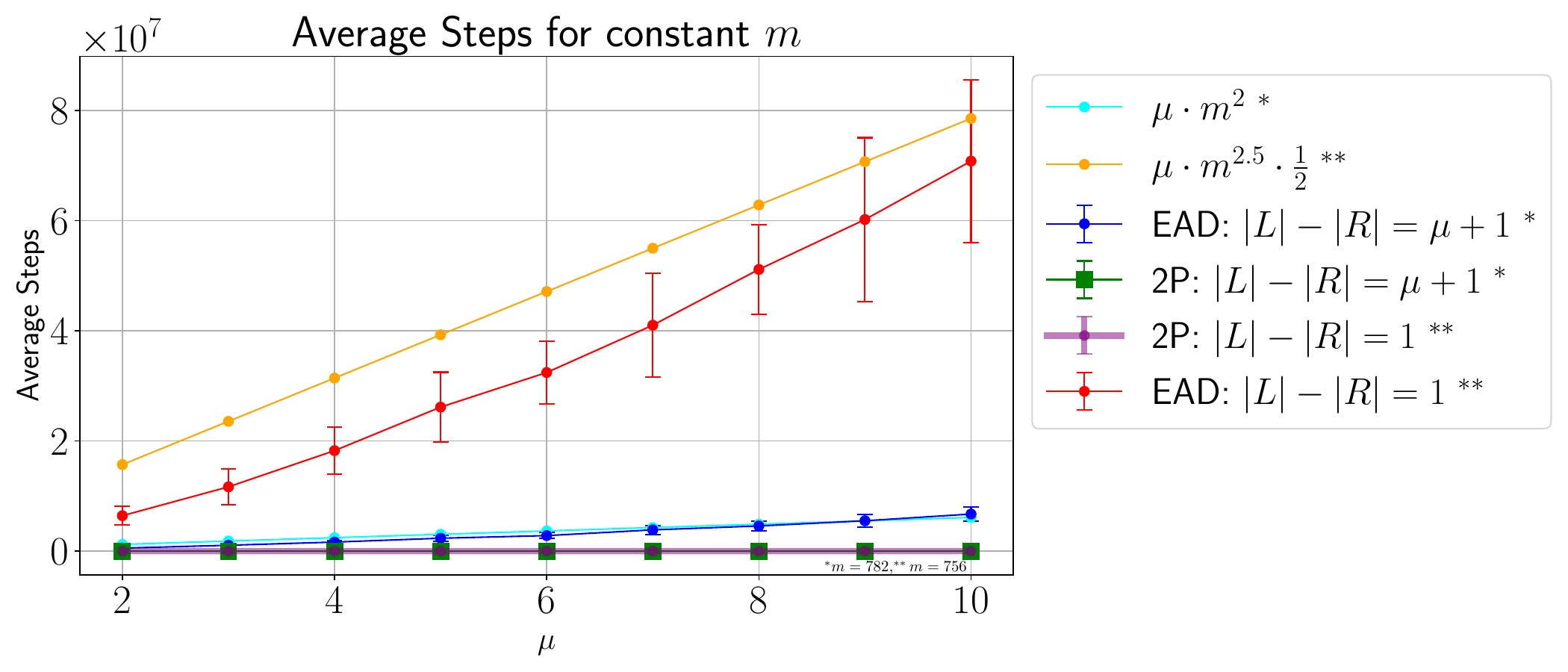}
    \caption{AED and 2P with fixed $m$ in Comp. Bip. Graphs}
    \label{fig:m-fixed}
\end{subfigure}
\hfill
\begin{subfigure}{0.5\textwidth}
    \centering
    \includegraphics[width=\linewidth]{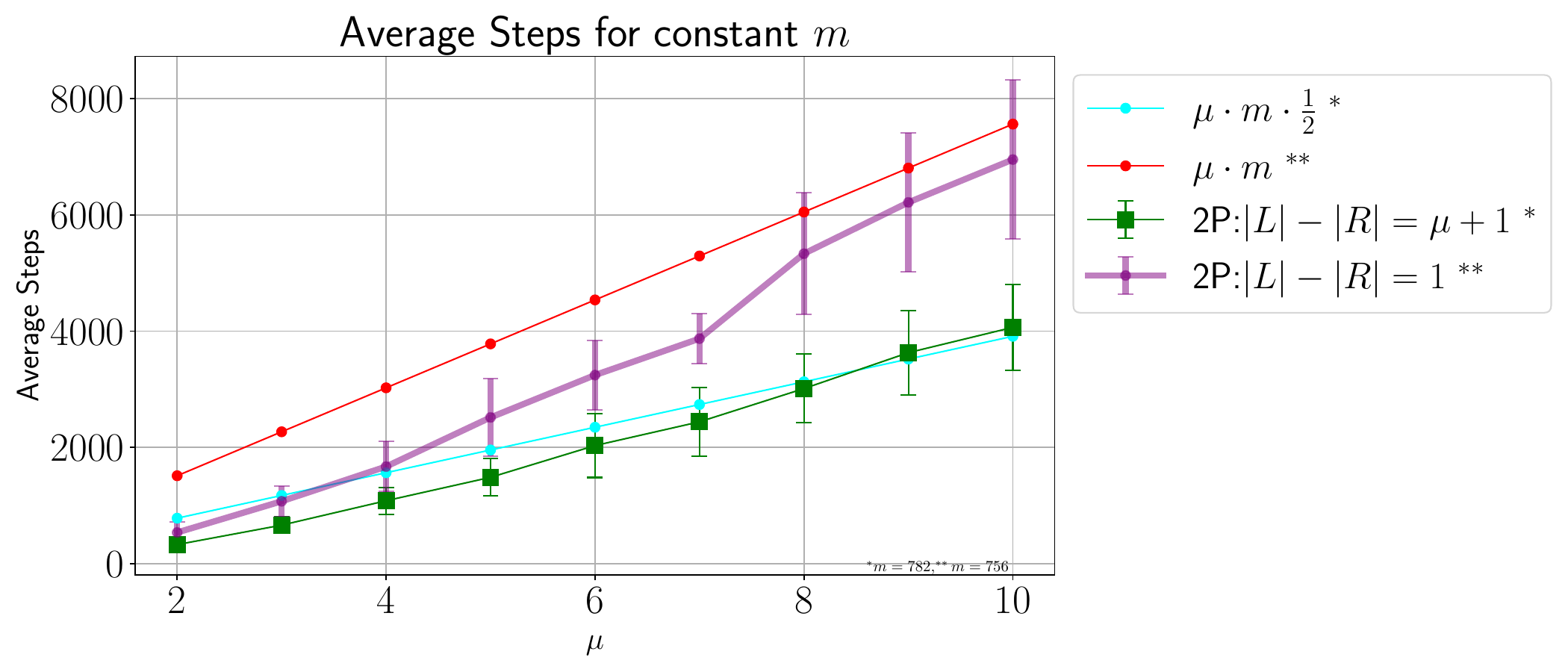}
    \caption{2P with fixed $m$ in Comp. Bip. Graphs}
    \label{fig:m-fixed-2p}
\end{subfigure}
\caption{Experimental results on complete bipartite graphs}
\end{figure*}

\section{Empirical Analysis}
\label{sec5}
In this section, we present our empirical findings on the performance of the evolutionary diversity algorithms on complete bipartite graphs and paths. Our experiments were designed to test the theoretical predictions made in previous sections, particularly focusing on the efficiency of the algorithm in terms of the number of iterations required to achieve optimal diversity.
\subsection{Experimental Setup}
\label{subsec:experimental-setup}

Our experiments were designed to explore the performance dynamics of the algorithms under two specific conditions: when the population size \(\mu\) is held constant and when the number of edges \(m\) remains fixed.

\paragraph{Complete Bipartite Graphs}
The starting condition for complete bipartite graphs involves a maximum matching where for each $0\leq i\leq |R|-1,$\(r_i\in R\) is matched to \(l_i\in L\), forming a homogeneous initial population. In the constant \(\mu\) scenario, we increase the size of both \(L\) and \(R\) by one unit per iteration to maintain a steady \(|L|-|R|\) difference, allowing a controlled analysis of the algorithms' scalability. In the constant $m$ scenario we simply increase $\mu$ by one per iteration.

\paragraph{Paths}
For paths, the initial population comprises maximum matchings including all even-indexed edges. With a fixed \(\mu\), the number of edges is incrementally increased by ten in each iteration, in order to cover a wider set of problem sizes, while staying experimentally feasible. In the constant $m$ case, out of feasibility, we simply increase $\mu$ by one per iteration.
\subsection{Methodology}
\label{subsec:methodology}

Each experiment was conducted 30 times to determine the average number of iterations and the standard deviation, estimating the algorithms' asymptotic runtime for both fixed population size (\(\mu\)) and a fixed number of edges (\(m\)). For complete bipartite graphs and fixed $m$ we chose $|L|=24$ and $|R|=23$ for the small gap case and $|L|=34$ and $|R|=23$ for the big gap case, such that the number of edges $m=782$ for the small gap case and $m=756$ for the big gap case are comparable in size.

\begin{figure*}
\centering
\begin{subfigure}{0.4\textwidth}
    \centering
    \includegraphics[width=\linewidth]{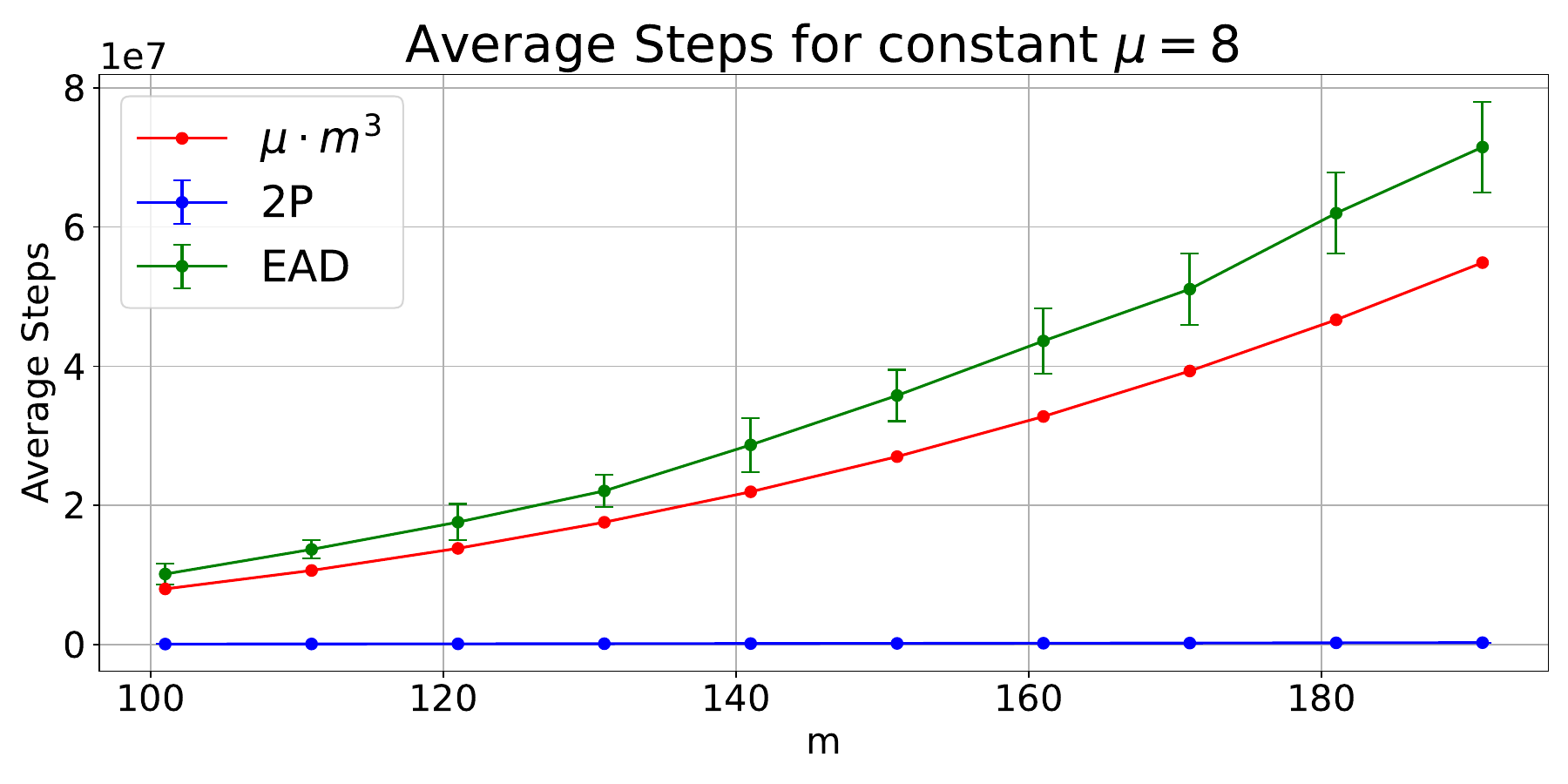}
    \caption{EA$_D$ and 2P with fixed $\mu=8$ in paths}
    \label{fig:mu-fixed-path}
\end{subfigure}
\begin{subfigure}{0.4\textwidth}
    \centering
    \includegraphics[width=\linewidth]{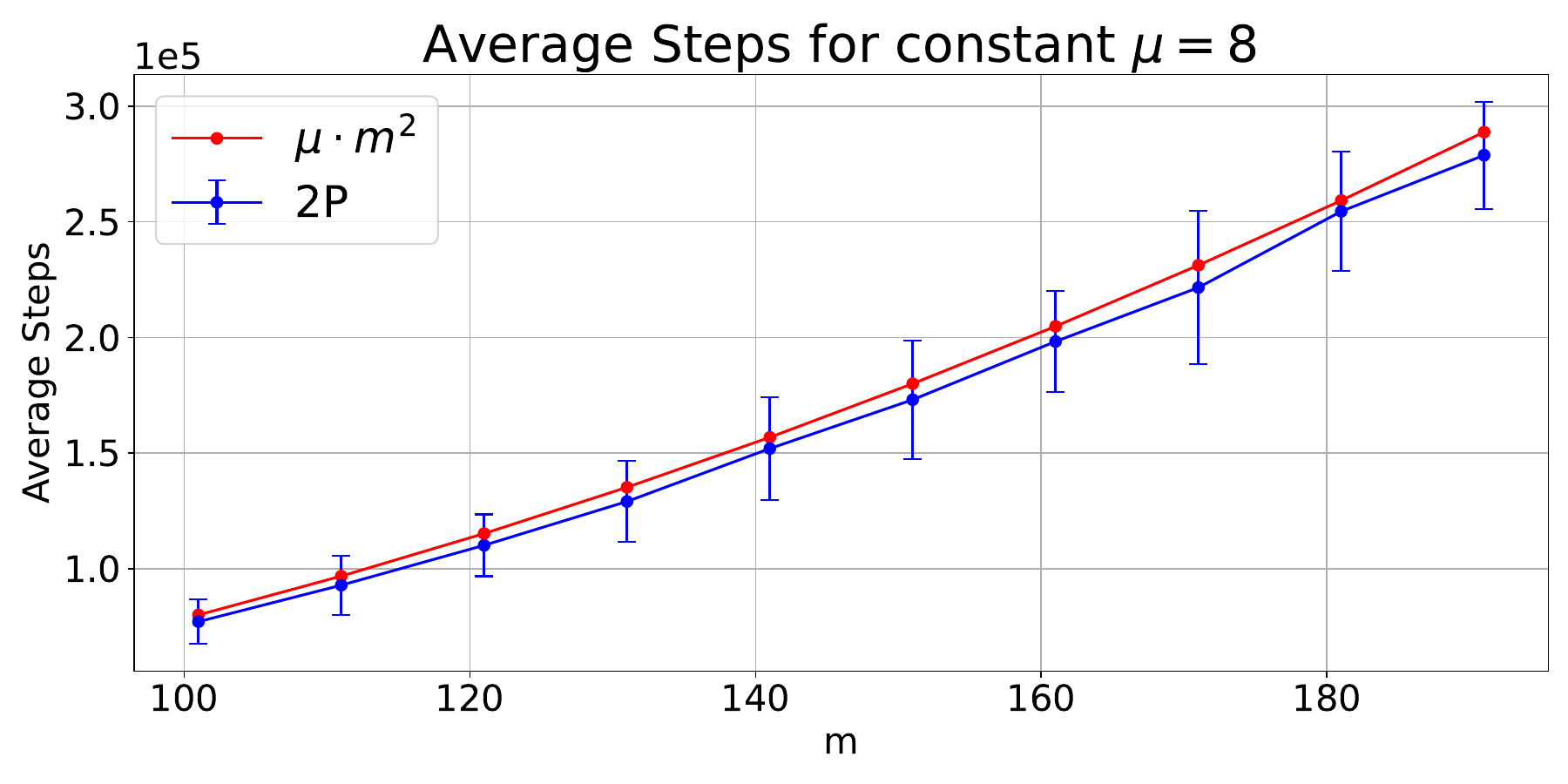}
    \caption{2P with fixed $\mu=8$ in paths}
    \label{fig:mu-fixed-2p-path}
\end{subfigure}
\\ 
\begin{subfigure}{0.4\textwidth}
    \centering
    \includegraphics[width=\linewidth]{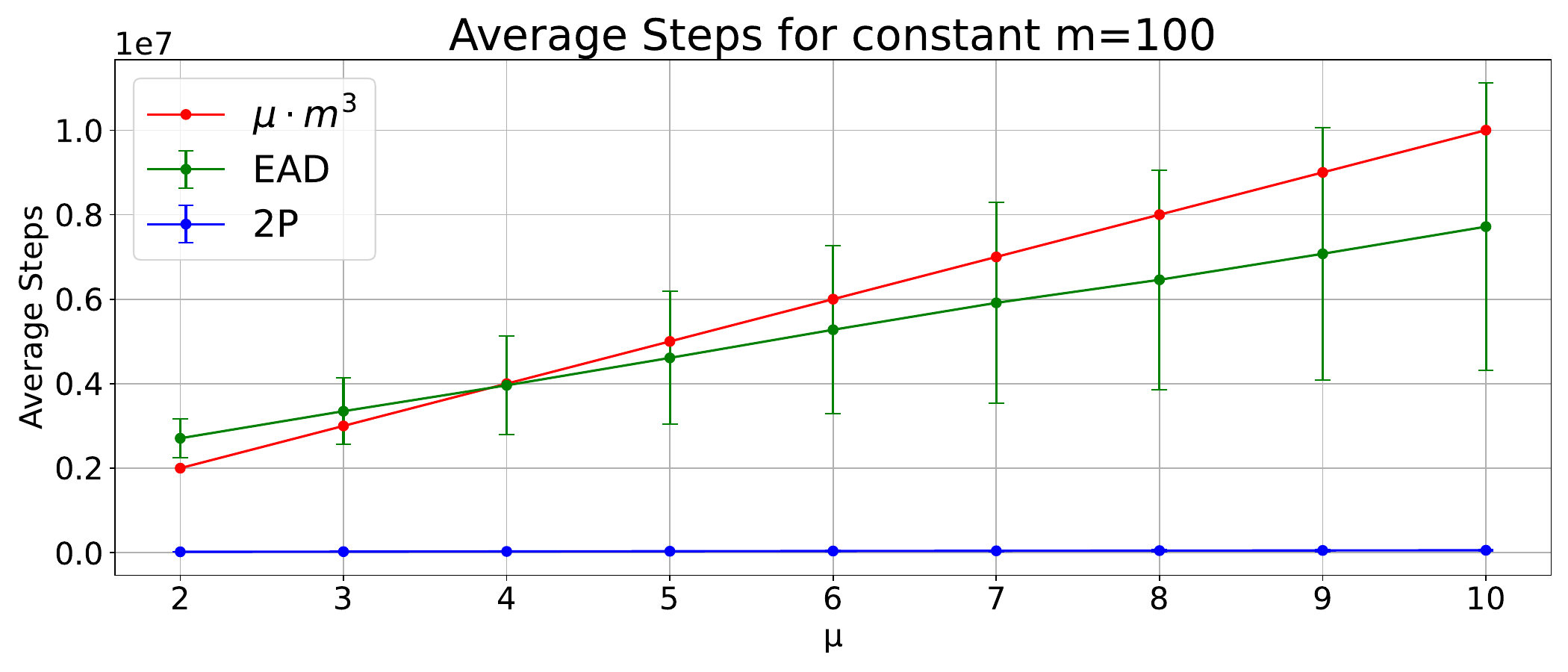}
    \caption{EA$_D$ and 2P with fixed $m=100$ in paths}
    \label{fig:m-fixed-path}
\end{subfigure}
\begin{subfigure}{0.4\textwidth}
    \centering
    \includegraphics[width=\linewidth]{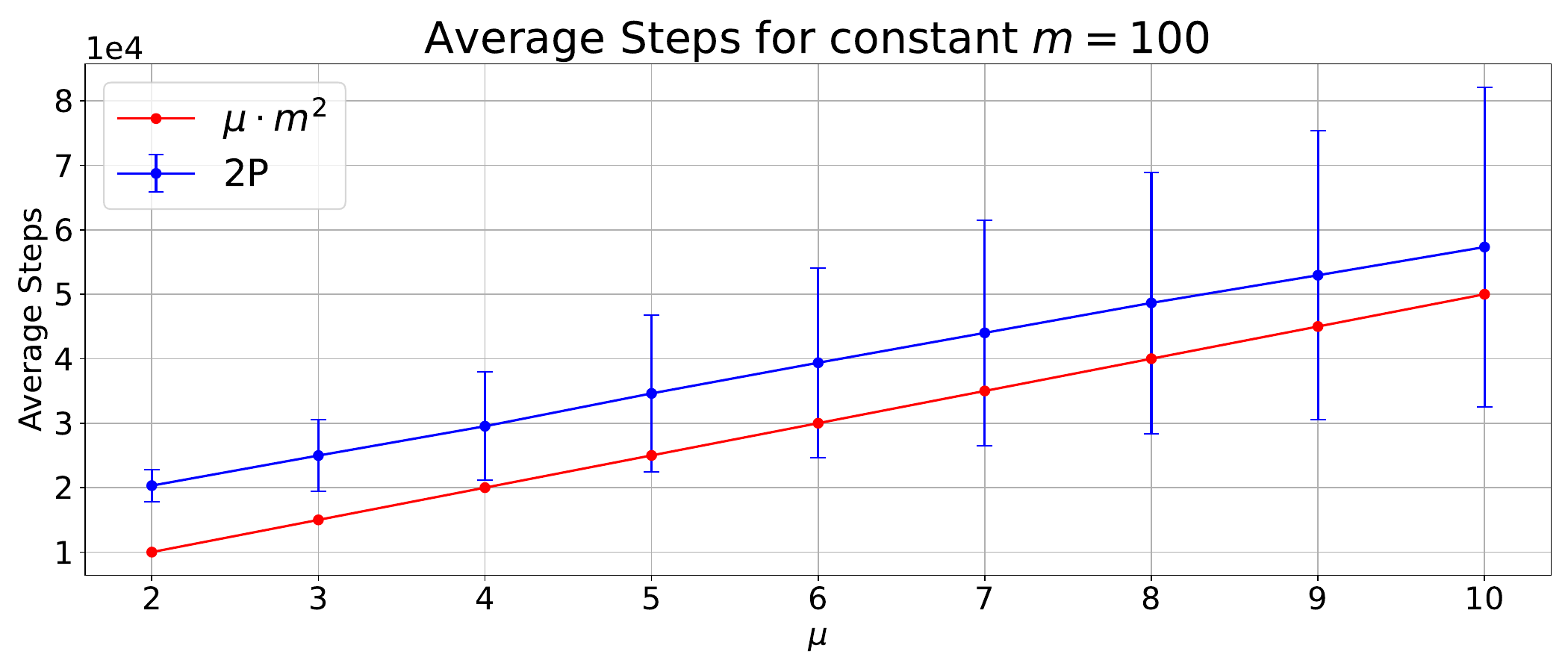}
    \caption{2P with fixed $m=100$ in paths}
    \label{fig:m-fixed-2p-path}
\end{subfigure}
\caption{Experimental results on paths}
\end{figure*}

\subsection{Complete Bipartite Graphs}
This subsection focuses on the performance of evolutionary diversity algorithms on complete bipartite graphs, specifically examining the \((\mu+1)\)-EA\(_D\) and 2P-EA\(_D\) algorithms.
\paragraph{\((\mu+1)\)-EA\(_D\)}
In Figure~\ref{fig:mu-fixed}, we show the average number of iterations for a fixed population size of $\mu=8$ and different values of $|L|-|R|$. Specifically, we examine cases where the difference \( |L|-|R| \) is either $1$,  referred to as the 'small gap' scenario or $\mu +1$, the 'big gap' scenario.
The \((\mu+1)\)-EA\(_D\) algorithm presented a quadratic growth in $m$ for the big gap case in iterations, empirically estimated as \(\mu m^2\), suggesting an out-performance by a factor of approximately \(\mu\log(m)\) over the theoretical bound. For the small gap case we empirically estimate the run time as \(\mu m^{2.5}\), an even stronger suggested out-performance by a factor of \(\mu m^{1.5}\log(m)\) when compared against the theoretical bound of \(O(\mu^2m^4\log(m))\).

In Figure~\ref{fig:m-fixed}, we display the average iteration counts for a constant edge count \( m \), considering the same values of \( |L|-|R| \). These findings echo the trends observed in Figure~\ref{fig:mu-fixed}, showcasing how the algorithm's behavior remains consistent across different graph sizes and population disparities.

\paragraph{2P-EA\(_D\)}

In Figure~\ref{fig:mu-fixed-2p} for $\mu$ fixed and Figure~\ref{fig:m-fixed-2p} for $m$ fixed, we zoom in on the results for the 2P-EA$_D$ algorithm. For both the small and big gap case the 2P-EA\(_D\) algorithm exhibited a linear increase in the number of iterations with respect to \(m\) when \(\mu\) was held constant and vice versa. Empirically, the run time for 2P-EA\(_D\) was observed to be close to \(\mu m\), a notable deviation from the predicted \(O(\mu^2m\log(m))\). 
The results summarized in Table~\ref{tab:runtime_results_bip} provide a summary of these observations. It is evident that the performance of the 2P-EA\(_D\) algorithm is not only superior in practice but also suggests that our theoretical bounds may be refined to more closely predict the empirical outcomes.
\subsection{Paths}
\label{subsec:paths}

This subsection focuses on the performance of evolutionary diversity algorithms on paths, specifically examining the \((\mu+1)\)-EA\(_D\) and 2P-EA\(_D\) algorithms.
\paragraph{\((\mu+1)\)-EA\(_D\)}
In Figure~\ref{fig:mu-fixed-path}, we present the average number of iterations when the population size \(\mu\) is fixed at 8. The graph illustrates how the number of iterations required for convergence changes as the number of edges \(m\) in the path increases. Figure~\ref{fig:m-fixed-path} shows the average number of iterations for a fixed number of edges \(m=100\) and varying population size \(\mu\). For the \((\mu+1)\)-EA\(_D\) algorithm, a trend of polynomial growth in the number of iterations is observed as a function of the problem size. When \(\mu\) is fixed at 8, the empirical runtime grows in line with \(\mu m^3\), which could indicate a performance better than the theoretical upper bound of \(O(\mu^3m^3)\) by a factor of $\mu^2$.

\paragraph{2P-EA\(_D\)}
When we examine the 2P-EA\(_D\) algorithm in Figure~\ref{fig:mu-fixed-2p-path} for a fixed \(\mu\), and in Figure~\ref{fig:m-fixed-2p-path} for a fixed \(m\), we notice a similar pattern. The empirical runtime for the 2P-EA\(_D\) is consistently around \(\mu m^2\), also possibly deviating by a factor of $\mu^2$ from the theoretical \(O(\mu^3m^2)\) bound.
The results in Table~\ref{tab:runtime_results_path} provide a summary of these observations. It is evident that the performance of the 2P-EA\(_D\) algorithm is not only superior in practice but also suggests that our theoretical bounds may be refined to more closely predict the empirical outcomes.

\begin{table}[t]
\centering 
\begin{small}
\caption{Summary of results for complete bipartite graphs}
\label{tab:runtime_results_bip}
\begin{tabular}{|c|c|c|c|c|}
\hline
Algo. & \multicolumn{2}{c|}{$|L|-|R|>\mu$} & \multicolumn{2}{c|}{$|L|-|R|\leq\mu$} \\ \hline
      & Empirical & Theor. UB & Empirical & Theor. UB \\ \hline
EA$_D$ & $\sim\mu m^2$ & $O(\mu^2m^2\log(m))$ & $\sim\mu m^{2.5}$ & $O(\mu^2m^4\log(m))$ \\ \hline
2P     & $\sim\mu m$   & $O(\mu^2n^2\log(n))$ & $\sim\mu m$       & $O(\mu^2m^2\log(m))$ \\ \hline
\end{tabular}
\end{small}
\end{table}

\begin{table}[t]
\centering
\caption{Summary of results for paths}
\label{tab:runtime_results_path}
\begin{small}
\begin{tabular}{|c|c|c|}
\hline
Algorithm & Empirical & Theor. UB\\ \hline
EA$_D$ & $\sim\mu m^3$ & $O(\mu^3m^3)$ \\ \hline
2P & $\sim\mu m^2$ & $O(\mu^3m^2)$ \\ \hline
\end{tabular}
\end{small}
\end{table}

\section{Conclusions}
\label{sec6}
In this study, we explored the application of evolutionary algorithms (EAs) for maximizing diversity in solving the maximum matching problem in complete bipartite graphs and paths. Our methodology was structured into two distinct phases: a rigorous theoretical analysis followed by comprehensive empirical evaluations. We specifically looked at the $(\mu+1)$-EA$_D$ and the Two-Phase Matching Evolutionary Algorithm (2P-EA$_D$), finding that both could achieve maximal diversity in expected polynomial time, with 2P-EA$_D$ showing a speed advantage in all scenarios.
Our findings not only underscore the utility of EAs in combinatorial diversity problems but also open up avenues for further research. A significant future direction would be to refine the theoretical upper bounds of these algorithms' runtime. Additionally, applying these insights to other graph problems and exploring real-world applications, could provide practical benefits.

\section*{Acknowledgements}
This work has been supported by the Australian Research Council through grant DP190103894. 

\bibliographystyle{splncs04}
\bibliography{sample-base}

\end{document}